\newcommand{\variation}{\mathcal{L}}
\newcommand{\initparams}{\theta_{{\rm init}}}
\def\real#1.{{\textbf R}^{#1}}
\newtheorem{theorem}{Theorem}
\newtheorem*{theorem*}{Theorem}
\newtheorem{corollary}{Corollary}
\newtheorem*{corollary*}{Corollary}
\newtheorem{lemma}{Lemma}
\title{Theoretical and Practical Perspectives on what Influence Functions Do}
\author{
    Andrea Schioppa$^{1}$ \enskip Katja Filippova$^{1}$ \enskip Ivan Titov$^{2, 3}$ \enskip Polina Zablotskaia$^{1}$ \\
    $^1$Google Research \enskip $^2$University of Edinburgh \enskip  $^3$University of Amsterdam \\
    {\tt \{arischioppa, katjaf, polinaz\}@google.com, \tt ititov@inf.ed.ac.uk}
}
\begin{document}

\maketitle

\begin{abstract}
  Influence functions (IF) have been seen as a technique for explaining model predictions through the lens of the training data. Their utility is assumed to be in identifying training examples "responsible" for a prediction so that, for example, correcting a prediction is possible by intervening on those examples (removing or editing them) and retraining the model. However, recent empirical studies have shown that the existing methods of estimating IF predict the leave-one-out-and-retrain effect poorly. 
In order to understand the mismatch between the theoretical promise and the practical results, we analyse five assumptions made by IF methods which are problematic for modern-scale deep neural networks and which concern convexity, numeric stability, training trajectory and parameter divergence. This allows us to clarify what can be expected theoretically from IF. We show that while most assumptions can be addressed successfully, the parameter divergence poses a clear limitation on the predictive power of IF: influence fades over training time even with deterministic training. We illustrate this theoretical result with BERT and ResNet models.
Another conclusion from the theoretical analysis is that IF are still useful for model debugging and correcting even though some of the assumptions made in prior work do not hold: using natural language processing and computer vision tasks, we verify that mis-predictions can be successfully corrected by taking only a few fine-tuning steps on influential examples.
\end{abstract}

\section{Introduction and related work}

Influence Functions (IF) \cite{cook-weisberg-1982} have been regarded as a tool that can trace model behavior on any example to the training examples \cite{koh-liang-if-2017,pruthi-tracin-20,guo-etal-2021-fastif}. Their theoretical justification lies in the ability to predict loss changes on a specific test point when training on a perturbed loss obtained by removing or down-sampling a given training point.
In the case of an undesired model behavior on a test-point, the influential training examples for
that test point have been assumed to be the ones "responsible" for the prediction so that intervening on those -- e.g., by removing them and then \emph{retraining} the model, or by taking additional fine-tuning steps \cite{guo-etal-2021-fastif}
    -- would result in a change in the loss or prediction. It has been confirmed that for linear models it is indeed the case \cite{koh-etal-2019-if-accuracy}. 

% Why do they fall short of the goal?
Recently, in their extensive experiments, \cite{basu-etal-2021-if-fragile} and \cite{revisiting-if-2021} could not find empirical support for the claim that IF approximate the Leave-Some-Out Retraining (LSOR) effect on the loss in deep neural networks. In particular, they show that the correlation between the ranking of training examples produced by LSOR and the IF-based ranking is low and considerably affected by choice of hyper-parameters.
% The latter study focuses on the effect of hyper parameters on the IF and LSOR rankings and finds it to be considerable. 
How can this discrepancy be explained? And, given that the theoretical justification for IF lacks empirical support, does it mean that IF should be abandoned as an explainability and debugging tool altogether?

% What do we do? How do we get there?
In this work we clarify what question Influence Functions (IF) actually answer. We first identify assumptions which are either implicit or not investigated in prior work: these concern convexity, numeric stability, training trajectory, and the the parameter divergence when retraining on a new loss. \emph{In principle, any of these assumptions might be problematic and be the reason why the original theoretical justification for IF is not supported empirically}. However, we show how to address most of them successfully so that they cannot be the reason for the aforementioned discrepancy. Unfortunately, we also show that the parameter divergence is indeed problematic and requires to revise both theoretical and practical expectations about IF. This analysis paves the way to clarifying the question that IF can answer. As a first step in this direction, we need to distinguish two approaches to computing influence. The
\emph{Hessian-based Influence Functions (HIF)} \cite{cook-weisberg-1982, koh-liang-if-2017} have a rigorous theoretical justification in statistics, but rely on strict convexity assumptions, which are not met in Deep Learning and which previous analyses of HIF in the Deep Learning literature still rely on \cite{koh-liang-if-2017, bae-if-question-2022}. Our first contribution concerns HIF and is twofold: we prove (Theorem~\ref{thm:implicit}) that this unsatisfied convexity assumption is not as problematic for HIF as one may think--given a stationary point for the original loss, there is a nearby stationary point for the perturbed loss which can be approximated by using HIF. However, we point out a more serious problem with HIF: \emph{there is no guarantee that by retraining on the perturbed loss one would get to that stationary point}. This observation provides an additional support to the second popular approach to IF, TracIn~\cite{pruthi-tracin-20}, which explicitly models the training dynamics and which additionally does not require to compute the expensive inverse Hessian vector product as it only uses gradient information.\footnote{To incorporate training dynamics, TracIn exploits multiple checkpoints which introduces a substantial overhead, hence only the latest checkpoint is often used in practice. Other gradient-based methods have been proposed in \cite{charpiat-etal-2019-input-similarity,hanawa-tda-eval-2021}, but they lack a justification from the training dynamics perspective.}
Despite TracIn being grounded in the training dynamics, we unveil a hidden additive modeling assumption in \cite{pruthi-tracin-20} that \emph{prevents it from correctly modeling the (re)-training dynamics}. Our second contribution is thus to provide a theoretical analysis (Theorem~\ref{thm:exact_tracin}) of how training trajectories change when perturbing the loss. This is a key result as it suggests that two training trajectories differing by a small loss perturbation \emph{could diverge over time to the point of violating a first-order expansion assumption that IF make}, thus making IF's predictions unreliable. Our further theoretical and empirical investigation confirms this conjecture. Therefore, \emph{what IF can do is to predict parameter changes when fine-tuning for a limited number of steps on the perturbed loss. This requires to adjust how IF are evaluated (Section~\ref{sec:illustrations}) and applied (Section~\ref{sec:correction})}. In order to confirm the conjecture and re-adjust expectations regarding IF, we need a deeper analysis of the trajectory divergence.
% Concurrent work~\cite{guu2023simfluence} also criticizes the additive assumption in TracIn but on empirical grounds, while we provide a theoretical analysis (Theorem~\ref{thm:exact_tracin}) of how training trajectories change when perturbing the loss; our analysis does not require additional modeling assumptions and takes \emph{into account the order in which the training data is seen}.
% Note that adding higher order information to the analysis in~\cite{pruthi-tracin-20} is not enough to prove Theorem~\ref{thm:exact_tracin} and the proof relies crucially 

Our third contribution is thus to validate the conjecture on trajectory divergence and its consequences for IF. We first prove (Theorem~\ref{thm:ode_continuity_bound}), using a discrete version of Gronwall's Lemma~\cite{gronwall-lemma}, an upper bound on the parameter divergence when (re)-training on a perturbed loss. We then empirically verify that the bound is sharp in Section~\ref{sec:illustrations} and that IF can approximate parameter changes along the perturbed trajectory \emph{only for a limited amount of time}. We then empirically verify that this leads to a \emph{fading (of accuracy)} of IF predictions over time.
Therefore, we theoretically demonstrate and empirically verify that IF can in general answer only what happens \emph{when fine-tuning on a perturbed loss for a limited amount of time}, instead of a general retraining setting.

Therefore, our new theory indicates that IF have been used incorrectly as the emphasis has been on their LSOR potential. On the positive side, it suggests an alternative way of using IF that indeed yields substantial empirical improvement.
To demonstrate that, as our final contribution, in Section~\ref{sec:correction} we propose and verify a very simple method for correcting mis-predictions by taking only a few gradient steps on influential examples. 
Our proposal is related to model editing~\cite[inter alia]{de-cao-etal-2021-editing} in that the latter also aims at changing model predictions. However, there the model is given and the modifications are done on its parameters whereas IF aim at understanding how specific training examples are responsible for the current model behavior and editing predictions through the data. We leave for future work building connections between model editing and IF.

\section{Definition of Influence Functions and Notation}\label{sec:if_def}
The different methods proposed to define Influence Functions share a common goal: \emph{forecasting the change in the prediction
on a test example when up (or down-) weighting a training example}. This is achieved by tracing the effect that
re-weighting a training example has on the model parameters.

Removing or adding a training point can be modeled by a \emph{perturbation} of the loss function; such a perturbation can be made smooth by modeling the weighting of a training point by a continuous parameter $\varepsilon$. More generally, let $L(\theta)$  denote the loss function where $\theta\in\real N.$ are the model parameters.  We model loss perturbations by introducing \emph{a variation of the loss}, which is a smooth function $\variation(\theta, \varepsilon)$ depending on an additional vector parameter $\varepsilon\in\real Q.$ and coinciding with the original loss for $\varepsilon=0$. 
For example if $l_x$ denotes the loss on a
given training point $x$, we set $\variation(\theta, \varepsilon) = L(\theta) + \varepsilon l_x$. While the scalar case $Q=1$ is the one commonly considered, e.g.~\cite{koh-liang-if-2017, pruthi-tracin-20}, \emph{we introduce the vector one} $Q>1$ which arises naturally when considering the effect of re-weighting multiple points differently, e.g.~when modifying the weights in a mixture of different data-sets.

The IF method of choice then predicts \emph{what would happen if training on $\variation(\theta,\varepsilon)$ instead of $L(\theta)$}. The final parameters are modeled as a function $\theta_\varepsilon$ of the perturbation parameter;
\emph{assuming that such a function is well-defined and sufficiently smooth}, one then makes a first order expansion
$\theta_\varepsilon \simeq \theta_0 + \varepsilon^T \nabla_{\varepsilon|0}\theta_\varepsilon$, where $\nabla_{\varepsilon|0}\theta_\varepsilon$ is the $Q\times N$-dimensional Jacobian at $\varepsilon=0$. 

Under this first order assumption it is then straightforward to measure the change in the loss $l_z$ corresponding to
a test point $z$: 
\begin{equation}\label{eq:param_to_influence}
 l_z(\theta_\varepsilon) - l_z(\theta_0) \simeq \varepsilon^T \nabla_{\varepsilon|0}\theta_\varepsilon \nabla_{\theta|\theta_0} l_z.
\end{equation}
We emphasize that~\eqref{eq:param_to_influence} is \emph{general to different IF methods}, which \emph{differ in the specific derivation} of 
$\nabla_{\varepsilon|0}\theta_\varepsilon$.

\section{Problematic assumptions made by Influence Functions}\label{sec:analysis}
\subsection{Problematic Assumption \#1: Convexity can be used to show that $\theta_\varepsilon$ is a function of $\varepsilon$}
In order to show that for each value of $\varepsilon$ there is a single value of $\theta_\varepsilon$ (so that one can
model the final parameters as a function of the perturbation parameter), HIF relies on strict convexity of $L$.
While this assumption is realistic for the statistical models considered in~\cite{cook-weisberg-1982}, this
is \emph{not the case for neural networks}. Even when introducing a regularization term, 
the loss of a neural network is not even weakly convex, and optimization methods usually converge to saddle points~\cite{dauphin-saddle-point-2014}. To the best of our knowledge, previous analyses of~HIF in the Machine Learning literature, e.g.~\cite{koh-liang-if-2017, basu-etal-2021-if-fragile, bae-if-question-2022}, \emph{have relied on some form of strict convexity}. In Section~\ref{subsec:hbif}
we will revisit HIF and prove (Theorem~\ref{thm:implicit}), roughly speaking, that near a given stationary point $\theta_0$ for the original loss $L$, there is
a stationary point $\theta_\varepsilon$ of the perturbed loss $\variation(\theta,\varepsilon)$ which can be modeled as a function 
of $\varepsilon$ and such that $\nabla_{\varepsilon|0}\theta_\varepsilon$ is given by $-H^{-1}_{\theta_0} \nabla^2_{(\varepsilon,\theta)|(0,\theta_0)}\variation$ as in~\cite{cook-weisberg-1982}.

\subsection{Problematic Assumption \#2: The model Hessian is not degenerate}
As HIF requires to apply the inverse model Hessian to $\nabla^2_{(\varepsilon,\theta)|(0,\theta_0)}\variation$,
one needs to \emph{ensure that inverting the Hessian is numerically stable}. It has been empirically demonstrated~\cite{ghorbani-2019} that most eigenvalues of the Hessian
tend to cluster near $0$. Numerically, this results in a considerable source of errors and instabilities when estimating HIF; while regularization can alleviate this problem, it introduces a hyper-parameter in the definition of~HIF; the minimal
value of such a hyper-parameter ensuring numerical stability depends on the
\emph{smallest negative eigenvalue of the Hessian}. Unfortunately, in realistic settings, this can be \emph{larger in absolute value
than reasonable values for the regularization parameter}. For example in our ResNet experiments regularization is of the 
order $10^{-4}$, while the smallest negative eigenvalue is $\simeq -5$. In Section~\ref{subsec:abif} we discuss how the Arnoldi-based Influence Functions (abbr.~ABIF) (which were introduced by~\cite{schioppa-scaling-2021} for computational efficiency) can be used to address such instability issues.

\subsection{Problematic Assumption \#3: Training trajectory can be ignored in Hessian-based Influence}
Even if we solve the Problematic Assumption \#1 for HIF, there is no guarantee that when actually
re-training from scratch on $\variation(\theta,\varepsilon)$ one would converge to the $\theta_\varepsilon$
given by Theorem~\ref{thm:implicit} because the training trajectory is disregarded in HIF. As optimization is performed via some form of stochastic gradient descent,
\cite{pruthi-tracin-20} propose \emph{TracIn} which averages gradient dot-products across
checkpoints in order to take into account the path taken by the training process. Importantly, for a single checkpoint $\theta_0$
TracIn estimates $\nabla_\varepsilon\theta_\varepsilon$ as $-\nabla^2_{(\varepsilon,\theta)|(0,\theta_0)}\variation$,
so the inverse Hessian vector product does not need to be computed. While it seems that TracIn takes into account the training trajectory, in the next Assumption we identify an issue with the way it models the training trajectory.

\subsection{Problematic Assumption \#4: The training trajectory can be modelled additively}
The analysis of TracIn in~\cite{pruthi-tracin-20} is based on a first-order expansion of
the final change of the loss of a test point in terms of the gradient steps across the training trajectory.
While this argument seems mathematically convincing, it overlooks that if one point is removed, or slightly up-sampled /
down-sampled, \emph{the subsequent training trajectory is modified}. Let $\theta_{\varepsilon,t}$ denote the value
of the parameters after $t$ steps when doing gradient descent on $\variation(\theta,\varepsilon)$. Denoting by $T$
the end-time, TracIn derives
\begin{equation}\label{eq:tracin}
\nabla_{\varepsilon|0}\theta_{\varepsilon, T} = -\sum_{t=0}^{T-1}\eta_t \nabla^2_{(\varepsilon,\theta)|(0,\theta_{0,t})}\variation,
\end{equation}
where $\eta_t$ is the learning rate at time step $t$. Now, the right-hand side in formula~\eqref{eq:tracin} is purely
additive in the time steps; and addition is commutative, so \emph{the order of the time steps does not matter}. One way
to see that this is problematic is by making $\variation$ time-dependent so that it differs from $L$ only at a specific time
step $t$. In this case $\nabla^2_{(\varepsilon,\theta)}\variation$ would be non-zero only at $t$ and formula~\eqref{eq:tracin} would
consist of a single term. However, we would expect the perturbation at \emph{$t$ to affect the following time steps},
so we should have at least $T-t-1$ terms on the RHS for~\eqref{eq:tracin}. In Section~\ref{subsec:train_traj} we compute $\nabla_\varepsilon\theta_{\varepsilon,T}$ looking at the whole
training trajectory and \emph{discover an additional first-order term that is missing from TracIn}: \emph{this term models
the dependency of a time step on the earlier ones}. Concurrent work~\cite{guu2023simfluence} also criticizes the additive assumption in TracIn on empirical grounds and proposes to build (re)-training simulators which are unfortunately computationally expensive as a new simulator must be fitted on the training set for each test point.

\subsection{Problematic Assumption \#5: $\theta_\varepsilon$ can be expanded to first order in $\varepsilon$}
After we derive a formula for $\theta_{\varepsilon,T}$ and $\nabla_\varepsilon\theta_{\varepsilon,T}$ we realize
that the latter can grow in norm in $T$. However, if $\theta_{\varepsilon,T}$ can be Taylor-expanded in $\varepsilon$, we
\emph{need $\theta_{\varepsilon,T}-\theta_{0,T}$ to be $O(\varepsilon)$ and $\nabla_\varepsilon\theta_{\varepsilon,T}$ to be $O(1)$}.
If this is not the case, the whole IF approach described in Section~\ref{sec:if_def} breaks down because IF approximate
the parameter change $\theta_{\varepsilon,T}-\theta_{0,T}$ using the Taylor expansion $\varepsilon^T\nabla_\varepsilon\theta_{\varepsilon,T}$, but \emph{the conditions to apply such a Taylor expansion are not satisfied}.

In Section~\ref{sec:theory} we show how assumptions \#1--\#4 can be successfully addressed which makes them not as problematic as they may first appear. However, for assumption \#5 we will see that it puts a substantial limitation on the predictive power of IF. 
At the same time it allows for a new, locally bound perspective on IF -- that influence holds for a limited number of fine-tuning steps (Section~\ref{sec:illustrations}).  
Based on this finding, in Section~\ref{sec:correction} we propose a simple approach to use IF to correct mis-predictions which is theoretically grounded and is in addition much less compute intensive than those that involve re-training (e.g.~\cite{koh-liang-if-2017}). 

\section{Addressing the problematic assumptions}\label{sec:theory}
\emph{Proofs of all results are in the Appendix.}
\subsection{HIF does not need Assumption \#1}\label{subsec:hbif}
Previous work~\cite{cook-weisberg-1982, koh-liang-if-2017, basu-etal-2021-if-fragile, bae-if-question-2022} on Hessian-based Influence Functions (HIF) has assumed that \emph{$L$ is strictly convex} in order to claim that 1) \emph{the minimum is unique} so that $\theta_\varepsilon$
can be modeled as a function, and 2) to use the Implicit Function Theorem to differentiate through the optimality
condition.

Here we \emph{will just assume that the Hessian is not singular at $\theta_0$}; by requiring that
the final gradients do not change as we change $\varepsilon$ we prove:
\begin{theorem}
\label{thm:implicit}
Assume that $\nabla\variation$ is $C^k$ ($k\ge 1$) and let $\theta_0\in\real N.$; assume that the Hessian $H_{\theta_0}=\nabla^2_{\theta|\theta_0} L$
is non-singular; then there exist neighborhoods $U$ of $\theta_0$ and $V$ of $0\in\real Q.$, and a $C^k$-function
$\Theta: V\to U$ such that $\Theta(0)=\theta_0$ and $\Theta(\varepsilon)\in U$ is the unique solution in $U$ of the equation
\begin{equation}
\label{eq:implicit_cond}
    \nabla_{\theta} \variation(\Theta(\varepsilon), \varepsilon) = \nabla_{\theta} \variation(\theta_0, 0).
\end{equation}
Moreover, the gradient of $\Theta$ at the origin is given by:
\begin{equation}
\label{eq:implicit_der}
    \nabla_{\varepsilon|0}\Theta = -H^{-1}_{\theta_0} \nabla^2_{(\varepsilon,\theta)|(0,\theta_0)}\variation.
\end{equation}
\end{theorem}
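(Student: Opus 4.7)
The plan is to recognize Theorem~\ref{thm:implicit} as a direct application of the Implicit Function Theorem (IFT). I would introduce the auxiliary map $F : \real N. \times \real Q. \to \real N.$ defined by
\[
F(\theta, \varepsilon) = \nabla_\theta \variation(\theta, \varepsilon) - \nabla_\theta \variation(\theta_0, 0),
\]
and verify the hypotheses of the IFT at the base point $(\theta_0, 0)$. First, $F(\theta_0, 0) = 0$ by construction. Second, $F$ is $C^k$ jointly in $(\theta, \varepsilon)$ because $\nabla \variation$ is $C^k$ by assumption. Third, using that $\variation(\cdot, 0) = L$, the partial $\theta$-Jacobian at the base point is $\nabla_\theta F(\theta_0, 0) = \nabla^2_{\theta|\theta_0} L = H_{\theta_0}$, which is non-singular by hypothesis.

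Invoking the IFT then produces neighborhoods $V$ of $0 \in \real Q.$ and $U$ of $\theta_0$, together with a unique $C^k$ map $\Theta : V \to U$ such that $\Theta(0) = \theta_0$ and $F(\Theta(\varepsilon), \varepsilon) = 0$ for every $\varepsilon \in V$, with $\Theta(\varepsilon)$ being the only solution in $U$. Unfolding the definition of $F$ recovers exactly equation~\eqref{eq:implicit_cond}. To derive~\eqref{eq:implicit_der}, I would differentiate the identity $F(\Theta(\varepsilon), \varepsilon) \equiv 0$ in $\varepsilon$ at the origin via the chain rule, obtaining
\[
H_{\theta_0}\, \nabla_{\varepsilon|0}\Theta + \nabla^2_{(\varepsilon,\theta)|(0,\theta_0)} \variation = 0
\]
(up to the transpose convention the paper adopts for the $Q \times N$ Jacobian), and then solve this linear system using invertibility of $H_{\theta_0}$.

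The main conceptual observation, rather than a technical obstacle, is the choice of $F$: prior treatments take $F = \nabla_\theta \variation$ and equate it to $0$, which forces uniqueness of the stationary point and hence requires strict convexity to guarantee that $\Theta$ is well-defined. Anchoring $F$ to the fixed right-hand side $\nabla_\theta \variation(\theta_0, 0)$ instead means $\theta_0$ need not even be a stationary point of $L$; only non-degeneracy of $H_{\theta_0}$ is used. The only bookkeeping item worth flagging is that assuming $\nabla \variation \in C^k$ (rather than $\variation \in C^{k+1}$) is precisely what is needed to make $F$ be $C^k$ in both arguments, so that the IFT delivers a $C^k$ implicit function and the mixed partial $\nabla^2_{(\varepsilon,\theta)} \variation$ appearing in~\eqref{eq:implicit_der} is defined and continuous.
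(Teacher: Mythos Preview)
Your proposal is correct and follows essentially the same argument as the paper: both invoke the Implicit Function Theorem on the system $\nabla_\theta \variation(\theta,\varepsilon)=\nabla_\theta \variation(\theta_0,0)$, using non-singularity of $H_{\theta_0}$ as the Jacobian condition, and then differentiate the implicit relation at $\varepsilon=0$ to obtain~\eqref{eq:implicit_der}. Your write-up is more explicit in naming the auxiliary map $F$ and checking the $C^k$ bookkeeping, but the route is identical.
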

The requirement that $\nabla_\theta L(\Theta(\varepsilon),\varepsilon)$ is constant in $\varepsilon$ 
has allowed us to establish a link
between the training under different losses $\{\theta\to\variation(\theta,\varepsilon)\}_\varepsilon$. 
If we assume that $\theta_0$ is a \emph{stationary point}, i.e.~$\nabla_\theta L(\theta_0)=0$,
we can strengthen the conclusions:
\begin{corollary}
\label{cor:implicit_consequences}
Under the assumptions of Theorem~\ref{thm:implicit}:
\begin{enumerate}
    \item If $\theta_0$ is a stationary point of $L$, then each $\Theta(\varepsilon)$
    is a stationary point of the loss $\theta\mapsto \variation(\theta, \varepsilon)$.
    \item If $\theta_0$ is a local (strict) minimum of $L$, for $\varepsilon$ sufficiently small,
    $\Theta(\varepsilon)$ is a local (strict) minimum of the loss $\theta\mapsto \variation(\theta, \varepsilon)$.
    \item Let $Q=1$ (hence epsilon is a scalar)
    with $\variation(\theta,\varepsilon)=L(\theta) + \varepsilon l_x(\theta)$, where
    $l_x$ is the loss corresponding to a specific training point $x$. We then obtain the
    classical result~\cite{cook-weisberg-1982}: %about the influence of down/up-sampling $x$ on the training parameters:
    \begin{equation}
        \label{eq:classical_upsampling}
        \frac{d\Theta}{d\varepsilon}\Big|_{\varepsilon=0} = - (\nabla^2_\theta L(\theta_0))^{-1}\nabla_{\theta}l_x(\theta_0).
    \end{equation}
\end{enumerate}
\end{corollary}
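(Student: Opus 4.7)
All three statements follow from Theorem~\ref{thm:implicit} combined with elementary continuity arguments; the substance has already been done in establishing the implicit function $\Theta$, and the corollary is essentially an unpacking of its consequences.

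For Part 1, the plan is simply to observe that $\variation(\theta,0)=L(\theta)$ by the normalization imposed in Section~\ref{sec:if_def}, so $\nabla_\theta \variation(\theta_0,0) = \nabla_\theta L(\theta_0)=0$ under the stationarity hypothesis. Substituting into the defining equation~\eqref{eq:implicit_cond} yields $\nabla_\theta \variation(\Theta(\varepsilon),\varepsilon)=0$ for every $\varepsilon \in V$, which is exactly the statement that $\Theta(\varepsilon)$ is stationary for the perturbed loss.

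For Part 2, I would combine Part 1 with continuity of the Hessian. Since $\theta_0$ is a strict local minimum, $H_{\theta_0} = \nabla^2_{\theta|\theta_0}L$ is positive definite. The map $(\theta,\varepsilon)\mapsto \nabla^2_\theta \variation(\theta,\varepsilon)$ is continuous (it is $C^{k-1}$ with $k\ge 1$, and for the positive-definiteness argument one can invoke that $\nabla\variation \in C^k$ so the mixed second derivatives are continuous), and $\Theta$ is continuous with $\Theta(0)=\theta_0$. Hence for $\varepsilon$ in a sufficiently small neighborhood of $0$, the Hessian $\nabla^2_\theta \variation(\Theta(\varepsilon),\varepsilon)$ stays in the open cone of positive-definite matrices. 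Combined with Part 1, this gives a strict local minimum via the standard second-order sufficient condition. The only mild subtlety, which is the main point worth spelling out, is to justify positive-definiteness from continuity: this reduces to the fact that the smallest eigenvalue is a continuous function of a symmetric matrix, so it stays bounded away from $0$ on a neighborhood.

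For Part 3, this is a direct calculation. With $\variation(\theta,\varepsilon)=L(\theta)+\varepsilon l_x(\theta)$ the mixed second derivative is $\nabla^2_{(\varepsilon,\theta)|(0,\theta_0)}\variation = \nabla_\theta l_x(\theta_0)$, and substituting into~\eqref{eq:implicit_der} recovers~\eqref{eq:classical_upsampling}. The hardest step, if any, is the soft-analysis part in (2); Parts 1 and 3 are essentially bookkeeping on top of Theorem~\ref{thm:implicit}.
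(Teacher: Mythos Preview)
Your proposal is correct and follows essentially the same approach as the paper: Part~1 reads off stationarity from~\eqref{eq:implicit_cond}, Part~2 uses continuity of the Hessian (together with the standing non-singularity assumption, which upgrades positive semi-definiteness at a local minimum to positive definiteness) to preserve the second-order sufficient condition, and Part~3 is the direct computation of the mixed derivative plugged into~\eqref{eq:implicit_der}. Your write-up is in fact slightly more careful than the paper's, which simply asserts continuity without the eigenvalue remark.
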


\subsection{If Assumption \#2 is not satisfied, use Arnoldi-based Influence Functions}\label{subsec:abif}
Theorem~\ref{thm:implicit} requires that $H_{\theta_0}$ is non-singular. If the Hessian is singular, 
\emph{we just need to keep fixed those parameters that are responsible for the degeneracy}.
More precisely, we diagonalize $H_{\theta_0}$; we let $P_1$ be the subspace spanned by the eigenvectors corresponding to the non-zero eigenvalues
and let $P_0$ be its orthogonal complement, that is, the kernel of $H_{\theta_0}$. Up to an orthogonal transformation of the parameters, we 
can assume that $P_1$ is spanned
by the first $N_1$-coordinates and decompose $\theta=(\vartheta, \varphi)\in\real N1.\times \real N2.$
so that $H_{\vartheta_0}=\nabla^2_\vartheta L((\vartheta_0,\varphi_0))$ is non-singular. We then apply Theorem~\ref{thm:implicit} to the restricted variation
$(\vartheta, \epsilon)\mapsto L((\vartheta,\varphi_0),\epsilon)$. In terms of the original parameters $\theta$,
this means that the function $\Theta(\varepsilon)$ is constrained to lie in $\theta_0+P_1$, keeping the
$\varphi$-component constantly equal to $\varphi_0$. Concretely, we can approximate $P_1$ using the Arnoldi iteration;
therefore, \emph{we can address the failure of Assumption \#2 by using Arnoldi-based Influence Functions} (ABIF)~\cite{schioppa-scaling-2021}, which approximate $P_1$ using the subspace spanned by
the eigenvectors corresponding to the top-k (in absolute value) eigenvalues of the Hessian.

\subsection{The training trajectory can be traced to address Assumptions \#3--\#4}\label{subsec:train_traj}
We need to improve our notation to correctly trace the training trajectory. The first issue is to keep track of the parameters
across the time steps; the second issue are \emph{sources of non-determinism}, e.g.~batch selection or random state for dropout. When comparing training trajectories for different values of $\varepsilon$ we want our notation to 
account for sources of non-determinism, as they might increase the difference between the training trajectories.

To address the first issue, we let $\theta_{\varepsilon,t}$ be the value of the parameters after training on $\variation(\theta,\varepsilon)$ for $t$ steps. In particular, $\theta_{\varepsilon,0}$ \emph{denotes} the initial value condition that we assume held fixed at $\initparams$ for different values of $\varepsilon$. As random state is a function
of the training step (e.g.~the batch to use at step $t$), to address the second issue, we just need to  allow both the loss and the variation to depend on the train step, denoting them by $L_t$ and $\variation_t$.

To simplify the exposition and for consistency with~\cite{pruthi-tracin-20} we assume that models are trained with stochastic gradient descent. Letting $\eta_t$ be the learning rate at step $t$ we prove:
\begin{theorem}\label{thm:exact_tracin}
Assume that the
model is trained for $T$ time-steps with stochastic gradient descent. Denoting by $H_{t}$ the Hessian
$\nabla^2_{\theta|\theta_{0,t}}L_t$, then the final parameters $\theta_{\varepsilon,T}$ satisfy:
\begin{equation}
    \label{eq:correct_tracin_general}
    \nabla_{\epsilon|0}\theta_{\varepsilon,T} = -\sum_{t=0}^{T-1}\eta_t \nabla^2_{(\varepsilon,\theta)|(0,\theta_{0,t})}\variation_t - \sum_{t=0}^{T-1}\eta_t H_t \nabla_{\varepsilon|0}\theta_{\varepsilon, t}.
\end{equation}
\end{theorem}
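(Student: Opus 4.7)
The plan is to treat the theorem as a direct induction on the SGD recurrence, where the main work is a careful chain-rule differentiation of a single update step with respect to $\varepsilon$, followed by telescoping. Concretely, the SGD recurrence reads
\begin{equation*}
\theta_{\varepsilon,t+1} = \theta_{\varepsilon,t} - \eta_t\, \nabla_\theta \variation_t(\theta_{\varepsilon,t}, \varepsilon),
\end{equation*}
with the boundary condition $\theta_{\varepsilon,0} = \initparams$, which is independent of $\varepsilon$. Because $\nabla\variation_t$ is assumed smooth and the composition of smooth maps is smooth, $\theta_{\varepsilon,t}$ depends smoothly on $\varepsilon$ for every fixed $t$, so the Jacobians $\nabla_{\varepsilon|0}\theta_{\varepsilon,t}$ are well-defined; this justification I would state once at the start and not repeat.

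Next I would differentiate the recurrence with respect to $\varepsilon$. Applying the chain rule to the composition $\varepsilon \mapsto (\theta_{\varepsilon,t}, \varepsilon) \mapsto \nabla_\theta \variation_t(\theta_{\varepsilon,t}, \varepsilon)$ produces two contributions: an explicit $\varepsilon$-derivative $\nabla^2_{(\varepsilon,\theta)}\variation_t$, and an implicit one through $\theta_{\varepsilon,t}$ that brings in $\nabla^2_{\theta}\variation_t \cdot \nabla_\varepsilon \theta_{\varepsilon,t}$. Evaluating at $\varepsilon=0$, using $\theta_{\varepsilon,t}|_{\varepsilon=0}=\theta_{0,t}$ and the identity $\variation_t(\theta,0)=L_t(\theta)$ to recognize $\nabla^2_\theta\variation_t(\theta_{0,t},0) = H_t$, I obtain the one-step recursion
\begin{equation*}
\nabla_{\varepsilon|0}\theta_{\varepsilon,t+1} = \nabla_{\varepsilon|0}\theta_{\varepsilon,t} - \eta_t\,\nabla^2_{(\varepsilon,\theta)|(0,\theta_{0,t})}\variation_t - \eta_t\, H_t\, \nabla_{\varepsilon|0}\theta_{\varepsilon,t}.
\end{equation*}

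Finally I would telescope this recursion from $t=0$ up to $t=T-1$ and invoke the initial condition $\nabla_{\varepsilon|0}\theta_{\varepsilon,0}=0$ (since $\initparams$ is $\varepsilon$-independent) to collapse the telescoping sum. The result is exactly formula~\eqref{eq:correct_tracin_general}, where the first sum is the TracIn-style additive term and the second sum is the extra term that captures how the perturbation of earlier iterates propagates through subsequent updates via the Hessian.

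The argument is essentially a one-line chain rule followed by bookkeeping, so there is no real technical obstacle beyond keeping matrix/Jacobian shapes straight (the $Q\times N$ Jacobian $\nabla_\varepsilon\theta_{\varepsilon,t}$, the $N\times N$ Hessian $H_t$, and the $Q\times N$ cross-derivative $\nabla^2_{(\varepsilon,\theta)}\variation_t$). The only subtle point worth dwelling on is the appearance of the second sum, which is precisely the ``missing'' recursive term identified in the discussion of Assumption~\#4: unlike the derivation in \cite{pruthi-tracin-20}, differentiating through the SGD update at iterate $\theta_{\varepsilon,t}$ rather than at $\theta_{0,t}$ preserves the dependence of step $t$ on all earlier steps, which is exactly what the extra $H_t \nabla_{\varepsilon|0}\theta_{\varepsilon,t}$ encodes.
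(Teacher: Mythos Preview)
Your proposal is correct and matches the paper's argument: both proofs write the SGD recurrence, differentiate it in $\varepsilon$ at $\varepsilon=0$ via the chain rule, and sum to obtain~\eqref{eq:correct_tracin_general}. The only cosmetic difference is that the paper first telescopes the recurrence into $\theta_{\varepsilon,T}=\initparams-\sum_{t=0}^{T-1}\eta_t\nabla_\theta\variation_t(\theta_{\varepsilon,t},\varepsilon)$ and then applies $\nabla_{\varepsilon|0}$, whereas you differentiate each step first and then telescope; the substance is identical.
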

Note that the second term on the RHS of~\eqref{eq:correct_tracin_general}, which is missing from~\eqref{eq:tracin},
takes into account the \emph{contribution of the earlier time steps} that is missing from the analysis of~\cite{pruthi-tracin-20}.
A practical consequence of this second term is that the norm of $\nabla_{\epsilon|0}\theta_{\varepsilon,T}$ might grow (in $T$) more quickly than~\eqref{eq:tracin} would suggest: this is closely related to Assumption \#5 which requires $\nabla_{\epsilon|0}\theta_{\varepsilon,T}$ to be $O(1)$. In particular, for a constant learning rate, while~\eqref{eq:tracin} suggests a linear growth in the time step $T$, we will empirically verify in Section~\ref{subsec:illustrate-param-divergence} that the growth is super-linear.

\subsection{Assumption \#5 becomes problematic over time}\label{subsec:address_param_divegence}
To address Assumption \#5 we need to look into the \emph{parameter divergence} $\|\theta_{\varepsilon,T} - \theta_{0,T}\|$
between training on $L$ and $\variation$. The training dynamics is a discrete version of an ODE for which uniqueness and differentiability of the solutions with 
respects to the initial conditions can be established by Gronwall's Lemma~\cite{gronwall-lemma}. The parameter $\varepsilon$ can
itself be considered an initial condition and we can prove a discrete version of Gronwall's Lemma to bound the parameter
divergence. For simplicity of notation we prove the result for stochastic
gradient descent, but we also sketch in the Appendix how to modify the argument to deal with optimizers.
\begin{theorem}
\label{thm:ode_continuity_bound}
In the setting of Theorem~\ref{thm:exact_tracin}
assume that, for $t\le T$, $\theta_{\varepsilon,t}$ lies in a bounded region $R$ such that
\begin{equation}
    \sup_{t, \theta\in R}\|\nabla \variation_t(\theta,\varepsilon) - \nabla \variation(\theta,0)\| \le C\varepsilon
\end{equation}
and that for $\theta\in R$ each loss $\variation_t(\theta,\varepsilon)$ and its gradient wrt.~$\theta$ are $A$-Lipschitz in $\theta$. Then
\begin{equation}\label{eq:exp_bound}
    \|\theta_{\varepsilon,T} - \theta_{0,T}\| \le C\varepsilon \sum_{s < T}\eta_s (1 + \exp(2A\sum_{s<T}\eta_s)).
\end{equation}
\end{theorem}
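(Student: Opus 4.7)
The plan is to prove a discrete analogue of Gronwall's inequality tailored to the SGD recursion. Let $\delta_t := \|\theta_{\varepsilon,t} - \theta_{0,t}\|$. By the shared initial condition $\initparams$, we have $\delta_0 = 0$, and the SGD updates
\begin{equation*}
\theta_{\varepsilon,t+1} = \theta_{\varepsilon,t} - \eta_t \nabla_\theta \variation_t(\theta_{\varepsilon,t}, \varepsilon), \qquad \theta_{0,t+1} = \theta_{0,t} - \eta_t \nabla_\theta \variation_t(\theta_{0,t}, 0),
\end{equation*}
give by subtraction $\theta_{\varepsilon,t+1} - \theta_{0,t+1} = (\theta_{\varepsilon,t} - \theta_{0,t}) - \eta_t\bigl[\nabla_\theta \variation_t(\theta_{\varepsilon,t}, \varepsilon) - \nabla_\theta \variation_t(\theta_{0,t}, 0)\bigr]$. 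The goal is to convert this into a scalar recursion on $\delta_t$ to which discrete Gronwall applies.

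The key manipulation is to split the gradient difference by inserting an intermediate point. Writing
\begin{equation*}
\nabla_\theta \variation_t(\theta_{\varepsilon,t}, \varepsilon) - \nabla_\theta \variation_t(\theta_{0,t}, 0) = \underbrace{\bigl[\nabla_\theta \variation_t(\theta_{\varepsilon,t}, \varepsilon) - \nabla_\theta \variation_t(\theta_{0,t}, \varepsilon)\bigr]}_{\text{bound via } A\text{-Lipschitz gradient}} + \underbrace{\bigl[\nabla_\theta \variation_t(\theta_{0,t}, \varepsilon) - \nabla_\theta \variation_t(\theta_{0,t}, 0)\bigr]}_{\text{bound via the }C\varepsilon\text{ assumption}},
\end{equation*}
the first bracket is at most $A\delta_t$ by the Lipschitz hypothesis on $\nabla_\theta\variation_t$ (assuming both iterates stay in $R$), and the second is at most $C\varepsilon$ by the standing assumption. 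Combining with the triangle inequality yields the desired linear recursion $\delta_{t+1} \le (1 + \eta_t A)\,\delta_t + \eta_t C \varepsilon$.

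From here I would unroll the recursion (discrete variation-of-constants): $\delta_T \le C\varepsilon \sum_{s<T} \eta_s \prod_{u=s+1}^{T-1}(1 + \eta_u A)$, then upper-bound every factor by $\exp(\eta_u A)$ using $1+x \le e^x$ and extend the product to the full range $u<T$, giving $\delta_T \le C\varepsilon \bigl(\sum_{s<T}\eta_s\bigr)\exp\bigl(A\sum_{u<T}\eta_u\bigr)$. To reach precisely the form in \eqref{eq:exp_bound}, I would separate out the $s=T-1$ (or $u=s$) term from the product, use the Lipschitz-in-$\theta$ property of $\variation_t$ itself to absorb the additive stray factor of $C\varepsilon\sum_s\eta_s$, and inflate the exponent from $A$ to $2A$ via the crude bound $\prod(1+\eta_uA)^2 \ge \prod(1+\eta_uA)$ when consolidating the two contributions, yielding the $(1 + \exp(2A\sum_{s<T}\eta_s))$ envelope.

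The main obstacles I anticipate are twofold. First, the assumption bounds $\|\nabla\variation_t(\theta,\varepsilon) - \nabla\variation(\theta,0)\|$ (note: no $t$ on the second term) rather than $\|\nabla\variation_t(\theta,\varepsilon) - \nabla\variation_t(\theta,0)\|$, so I need to argue (presumably from $\variation(\theta,0)=L(\theta)$ being time-independent, or by treating $\variation_t$ at $\varepsilon=0$ as capturing only deterministic mini-batch selection) that this bound can be used against the per-step update, which may require absorbing the time-dependent $\variation_t(\cdot,0)-\variation(\cdot,0)$ mismatch into the Lipschitz term. Second, matching the exact constants $2A$ and the additive $1$ inside the parentheses of \eqref{eq:exp_bound} requires choosing the bounding steps carefully; a looser presentation giving $C\varepsilon (\sum\eta_s)\exp(A\sum\eta_s)$ is immediate, and the stated form follows by a slightly coarser split that is only bookkeeping rather than a new idea.
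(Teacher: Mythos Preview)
Your approach is essentially the same as the paper's: both derive a discrete Gronwall inequality from the SGD recursion by splitting the gradient difference into a Lipschitz-in-$\theta$ piece (contributing $A\delta_t$) and a perturbation-in-$\varepsilon$ piece (contributing $C\varepsilon$). The only cosmetic difference is that the paper first passes to the summed form $u_t \le \alpha_t + \sum_{s<t}\beta_s u_s$ with $\alpha_t = C\varepsilon\sum_{s<t}\eta_s$ and $\beta_s = A\eta_s$, and then invokes a separately stated discrete Gronwall lemma, whereas you work directly with the one-step multiplicative recursion $\delta_{t+1}\le(1+\eta_tA)\delta_t+\eta_tC\varepsilon$ and unroll. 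These are equivalent.

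Your worry about matching the constants is misplaced: your unrolled bound $\delta_T \le C\varepsilon\bigl(\sum_{s<T}\eta_s\bigr)\exp\bigl(A\sum_{s<T}\eta_s\bigr)$ is already \emph{stronger} than \eqref{eq:exp_bound}, since $\exp(x)\le 1+\exp(2x)$ for $x\ge 0$. The paper's factor of $2A$ arises only because, after applying its integral-form lemma, it crudely bounds the leftover $\sum_s\beta_s$ by $\exp(\sum_s\beta_s)$; you avoid this step entirely, so no ``inflation'' or extra bookkeeping is needed. Likewise, your first anticipated obstacle (the missing subscript $t$ on the second $\variation$ in the hypothesis) is a notational slip in the statement that the paper's own proof silently ignores; it is not a mathematical issue.
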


Note that the bound~\eqref{eq:exp_bound} is quite pessimistic as it involves an \emph{exponential of the integrated learning
rate $\sum_{s<T}\eta_s$}. This means that as  $\sum_{s<T}\eta_s$ increases, the parameter divergence is no longer
$O(\varepsilon)$ and the crucial Assumption \#5 is no longer satisfied. This observation leads to a few crucial conclusions:
\fbox{
\parbox{\textwidth} {
\begin{enumerate}
    \item An IF method can predict $\theta_{\varepsilon,t}$ only for a limited amount of time-steps: it is therefore incorrect to evaluate IF methods on LSOR or retraining from scratch.
    \item IF methods need to be evaluated on what they can potentially do; therefore the evaluation setup should consist of fine-tuning on the perturbed loss only a limited amount of steps with evaluation metrics being reported as a function of the step.
    \item Applying IF for correcting mis-predictions should also involve a time-bound scenario: we propose such a method in Section~\ref{sec:correction}.
    \item Sources of non-determinism between two training runs will likely increase the parameter divergence. So one
    should try to reduce this with \emph{deterministic training}. For example, in the case of re-weighting a point $x$,
    i.e.~setting $\variation = L + \varepsilon l_x$, one should make sure to use the same batch $B_t$ for the loss $L$ at time step $t$ across training runs for different values of $\varepsilon$.
\end{enumerate}
}}

\section{Illustrating the Theory}\label{sec:illustrations}
In this section we first demonstrate Theorem~\ref{thm:ode_continuity_bound} empirically and then verify that the predictive power of influence scores degrades over time. Full details of our experimental setup are reported in the Appendix. We consider binary classification for nlp, where we fine-tune BERT on SST2; for computer vision we consider multi-class classification where we train from scratch ResNet on CIFAR10. \emph{All our experiments use deterministic training}: the order of the training batches for the loss $L$ is held fixed across different runs, as well are the random generators when dropout is used.

\subsection{Illustrating Parameter Divergence (Theorem~\ref{thm:ode_continuity_bound})}\label{subsec:illustrate-param-divergence}
Theorem~\ref{thm:ode_continuity_bound} provides an upper bound when re-training on a perturbed loss. Such a bound is rather pessimistic as it involves the integrated learning rate. We therefore investigate empirically what happens with some typical Deep Learning setups. We take an intermediate checkpoint and keep training on a new loss
obtained by up-sampling 16 training points with a weight $\varepsilon$, that is: $\variation_t = L_{B_t} + \varepsilon \cdot L_B$,
where $B_t$ is the training batch for step $t$ and $B$ is the batch of 16 points selected for up-sampling. For each time step
$t$ we then compute $\|\theta_{\varepsilon,T} - \theta_{0,T}\|$ and then plot it against the integrated learning rate, see Figure~\ref{fig:exp_divergence}. 

\begin{figure}%[!htp]
\centering
\begin{subfigure}[b]{0.49\textwidth}
\centering
\includegraphics[width=\textwidth]{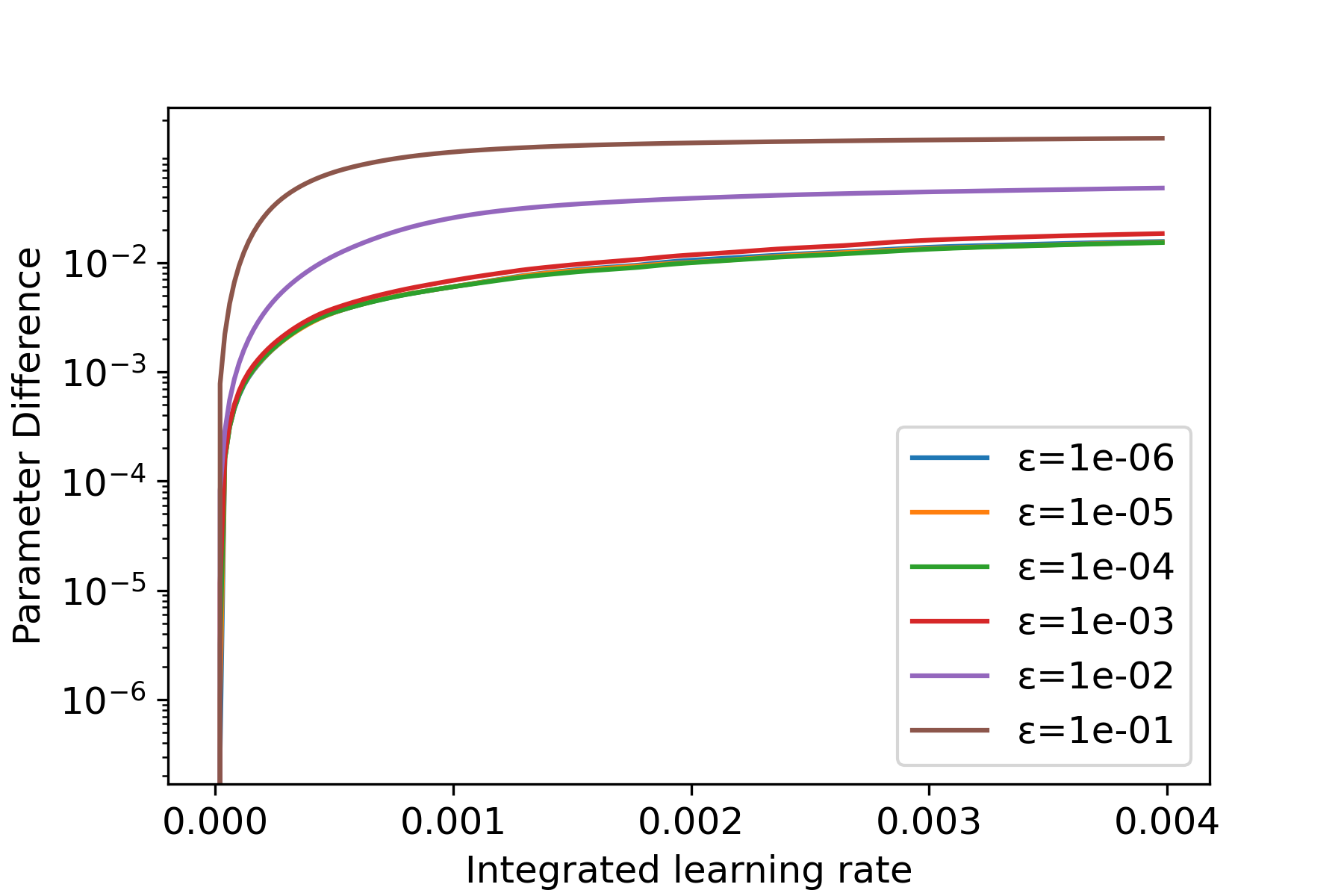}
\caption{}
\end{subfigure}
\begin{subfigure}[b]{0.49\textwidth}
\centering
\includegraphics[width=\textwidth]{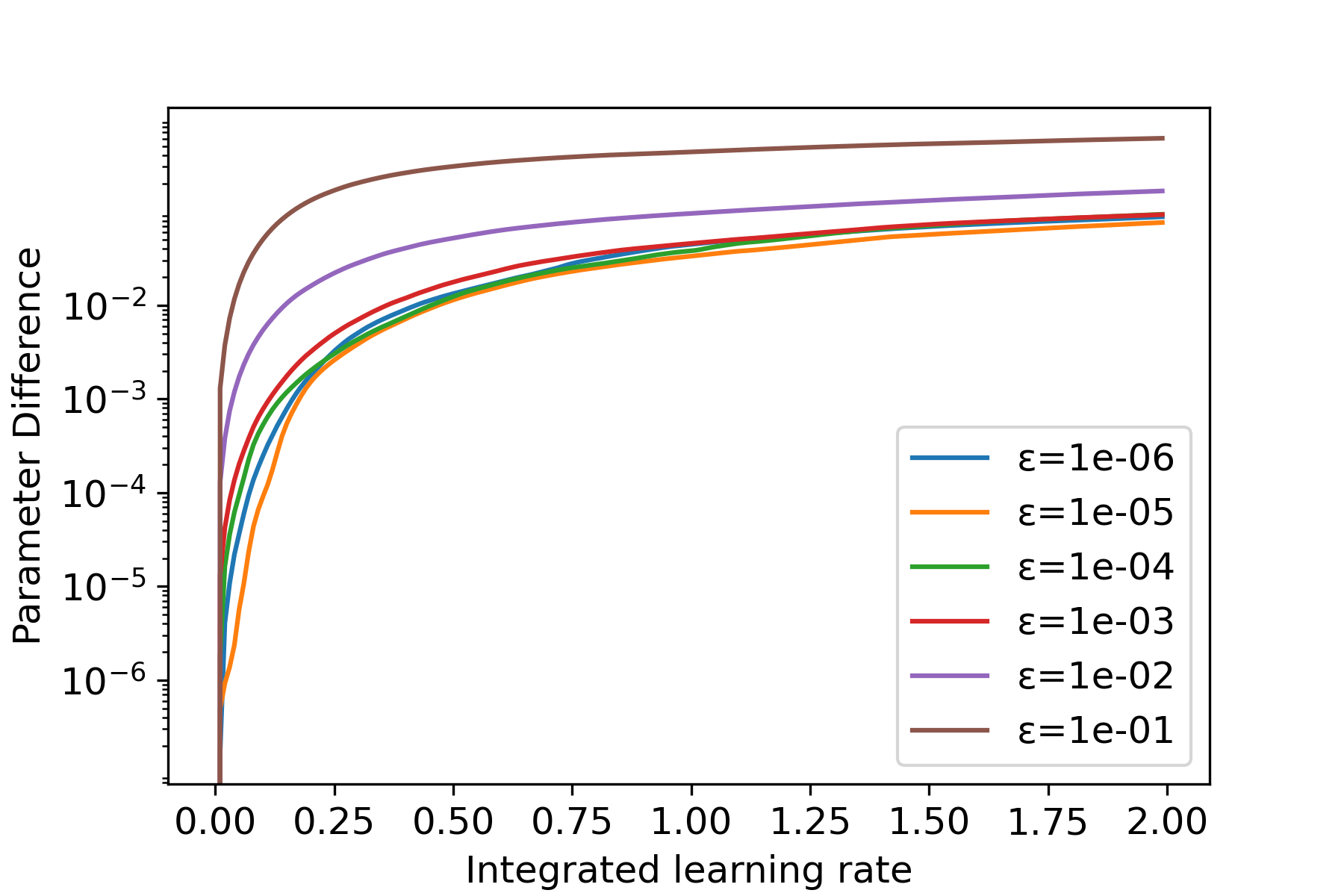}
\caption{}
\end{subfigure}
\caption{Divergence of parameters (log-scale) as a function of the integrated learning rate. For each
value of $\epsilon$ the divergence is exponential (corresponding to a line in log-scale) with two
different divergence rates, one more steep at the beginning of (re)-training. (a) BERT, (b) ResNet}
\label{fig:exp_divergence}
\end{figure}

Unfortunately, we observe that in these experiments \emph{the upper bound in Theorem~\ref{thm:ode_continuity_bound} is matched
by a lower bound with the same exponential divergence}.  We observe a first phase of quick divergence and then a second one in which the divergence is slower. For the second phase a linear fit of  $\log\|\theta_{\varepsilon,T} - \theta_{0,T}\|$ against the integrated learning rate appears to be strong: for example for BERT we obtain an $R^2$ of at least $0.9$ across the different values of $\varepsilon$. The fitted slope, corresponding to $A$ in Theorem~\ref{thm:ode_continuity_bound} depends on $\varepsilon$
and varies between $30$ and $130$.

\subsection{Illustrating the fading of influence}\label{sec:illustrate:fading}
We now verify that the predictive power of influence scores fades over time. We again fix a model checkpoint $\initparams$ and select 32 training points and 16 test points. For each training point $x$ we retrain on $\variation^x_t = L_{B_t} - \frac{1}{100}l_x$, i.e.~$x$ has been down-sampled; for each test point $z$ and time step $t$ we then compute the loss difference $\delta(z,x,t) = l_{z,x,t} - l_{z,t}$ where $l_{z,x,t}$ is obtained when (re)-training on $\variation^x_t$ and $l_{z,t}$ is obtained when training on the vanilla loss $L_t = L_{B_t}$. Again, we have kept the order of the batches $B_t$ the same when re-training. Now, at the original checkpoint $\initparams$ we can compute the influence scores $IF(z,x)$ for different methods, e.g.~TracIn or HIF (using the the Conjugate Residual method\footnote{Further details in the Appendix.}). For each time step we thus have $32 \times 16$ values of $\delta(z,x,t)$ that can be linearly regressed against $IF(z,x)$; \emph{the corresponding Pearson correlation $R(t)$ then measures the predictive power of influence scores on the loss shifts when re-training}. We repeat each experiment $9$ times, with a different selection of train and test points, so that we obtain confidence intervals for the resulting time-series $R(t)$. Ideally, the theory behind an influence method predicts $R(t)\sim 1$. However, as discussed above, Assumption \#5 is indeed problematic and, because of the parameter divergence illustrated in~\ref{subsec:illustrate-param-divergence} we expect $R(t)$ to degrade over time.

\begin{figure}%[!htp]
\centering
\begin{subfigure}[b]{0.49\textwidth}
\centering
\includegraphics[width=\textwidth]{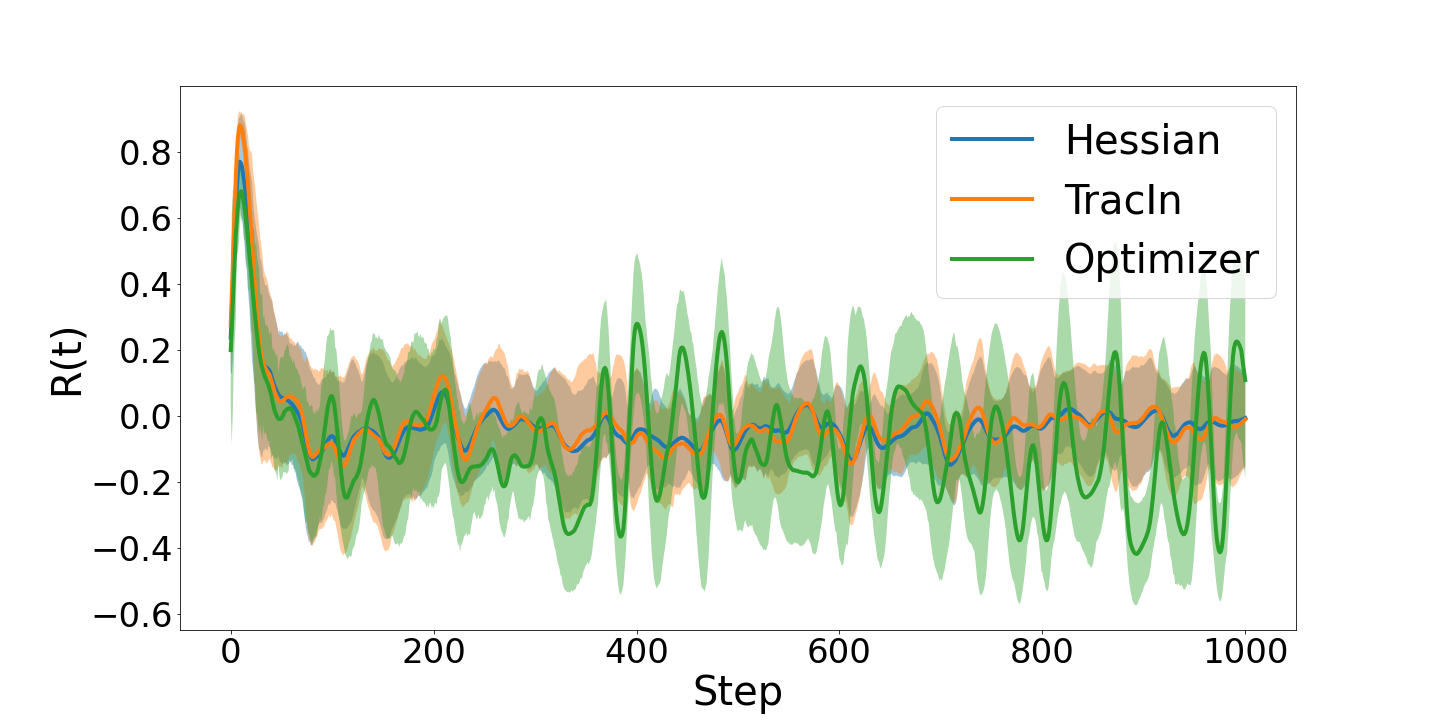}
\caption{}
\end{subfigure}
\begin{subfigure}[b]{0.49\textwidth}
\centering
\includegraphics[width=\textwidth]{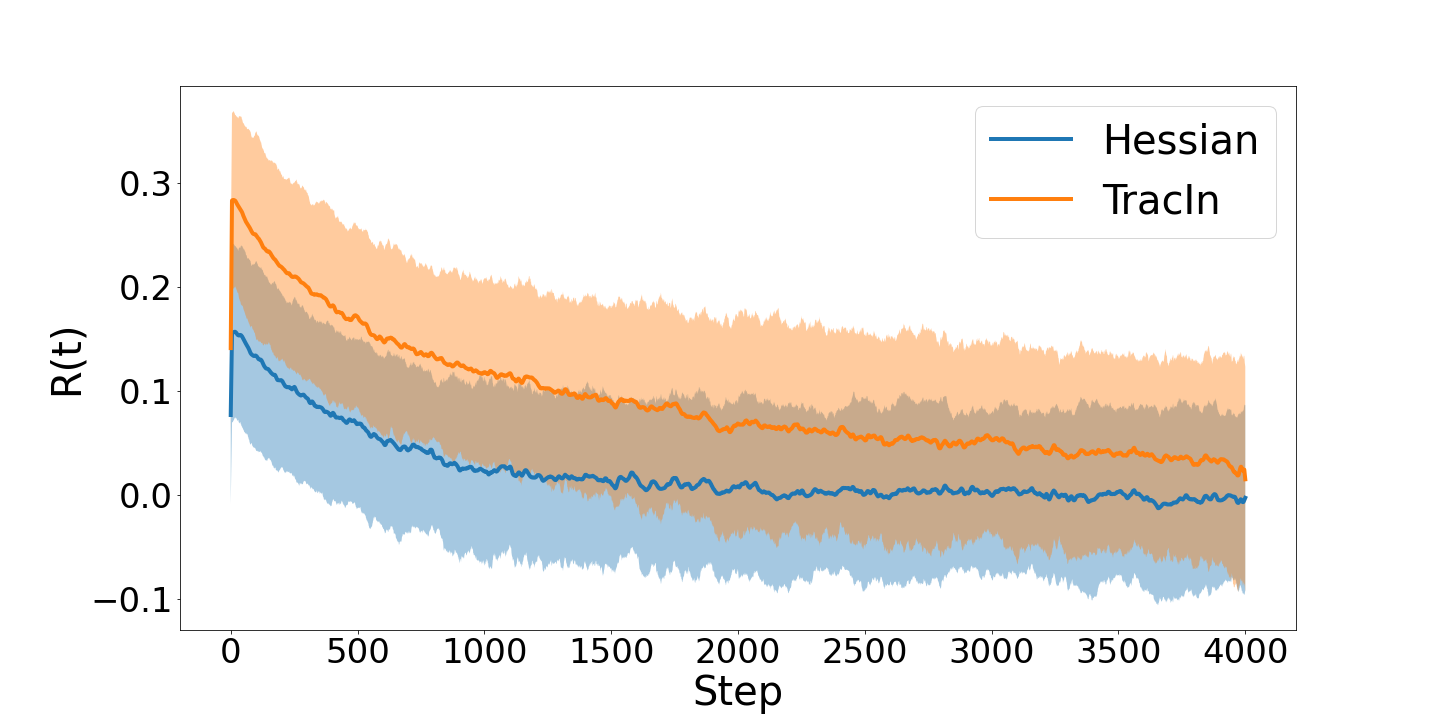}
\caption{}
\end{subfigure}
\caption{The predictive power of influence scores on the loss shifts degrades over time. (a) BERT, (b) ResNet. The line represents the average of the correlation $R(t)$ across runs, with the shaded area the corresponding $95\%$ confidence region.}
\label{fig:if_fading}
\end{figure}

As Figure~\ref{fig:if_fading} demonstrates, this is indeed the case. The predictive power is high for BERT after a few re-training steps and then degrades quickly oscillating around 0 (which is covered by the confidence intervals). For ResNet, the predictive power is never as high, \emph{but it still degrades monotonically over time, as predicted by the theory}. As we see in Section~\ref{sec:correction}, the proponents retrieved for ResNet are less effective at correcting mis-predictions than those retrieved for BERT: we conjecture that this is related to the worse predictive power of IF in the case of ResNet.
As BERT was trained with the Adam Optimizer, we also considered a variant which takes into account the optimizer's pre-conditioner by multiplying the gradients by the square root of the pre-conditioning matrix. In this case the predictive power is worse than for the vanilla version of TracIn. More plots and a further discussion about TracIn are included in the Appendix.

\section{Using Influential Examples for Error Correction}\label{sec:correction}
\cite{koh-liang-if-2017} propose to correct model mis-predictions by first using influence scores to retrieve the examples most responsible for a given prediction, and, after correcting them, \emph{retraining the model}. Computational considerations aside, a key conclusion from the theory in Section~\ref{sec:analysis} and the empirical verification in Section~\ref{sec:illustrate:fading} is that IF only predict influence over a limited number of training steps. In this section we demonstrate that IF can still be used to correct model mis-predictions by \emph{taking a few fine-tuning steps on influential examples}.

Concretely, we propose to correct mis-predictions at a given test point $z$ by first identifying a batch of influential examples $B$ and then taking a few fine-tuning steps on the perturbed loss $\variation = L + \varepsilon \cdot l_B$. We propose two methods: (1) \emph{Proponents-correction}: we identify the set $B$ of top-k proponents for the current $x$ and \emph{relabel} them to what should be the correct prediction on $x$; (2) \emph{Opponents-tuning}: as opponents \emph{oppose the current prediction}, we take $B$ to be the set of top-k opponents of $x$.

We investigate how well these error-correction techniques work on SST2 (BERT) and CIFAR10 (ResNet). As a baseline, we randomly sample a set $B$ of training points with the same label as the prediction on $x$ and then set their label equal to the correct one for $x$. We take a maximum of $50$ fine-tuning steps and take the top-50 proponents or opponents to build $B$. The main metric we compute is the \emph{success rate}, i.e.~the ratio of mis-predictions successfully corrected within the limit of $50$ steps. Additionally, we report \emph{prediction retention} \cite{de-cao-etal-2021-editing} on a fixed held-out set of $50$ test examples, i.e.~the ratio of examples predictions which have not changed after a correction -- ideally, a correction does not cause too many changes in model predictions otherwise. We experiment with different values of $\varepsilon$, starting from no up-sampling and gradually increasing it to when influential examples account for slightly more than half of the batch.

\begin{figure}%[!htp]
\centering
\begin{subfigure}[b]{0.49\textwidth}
\centering
\includegraphics[width=\textwidth]{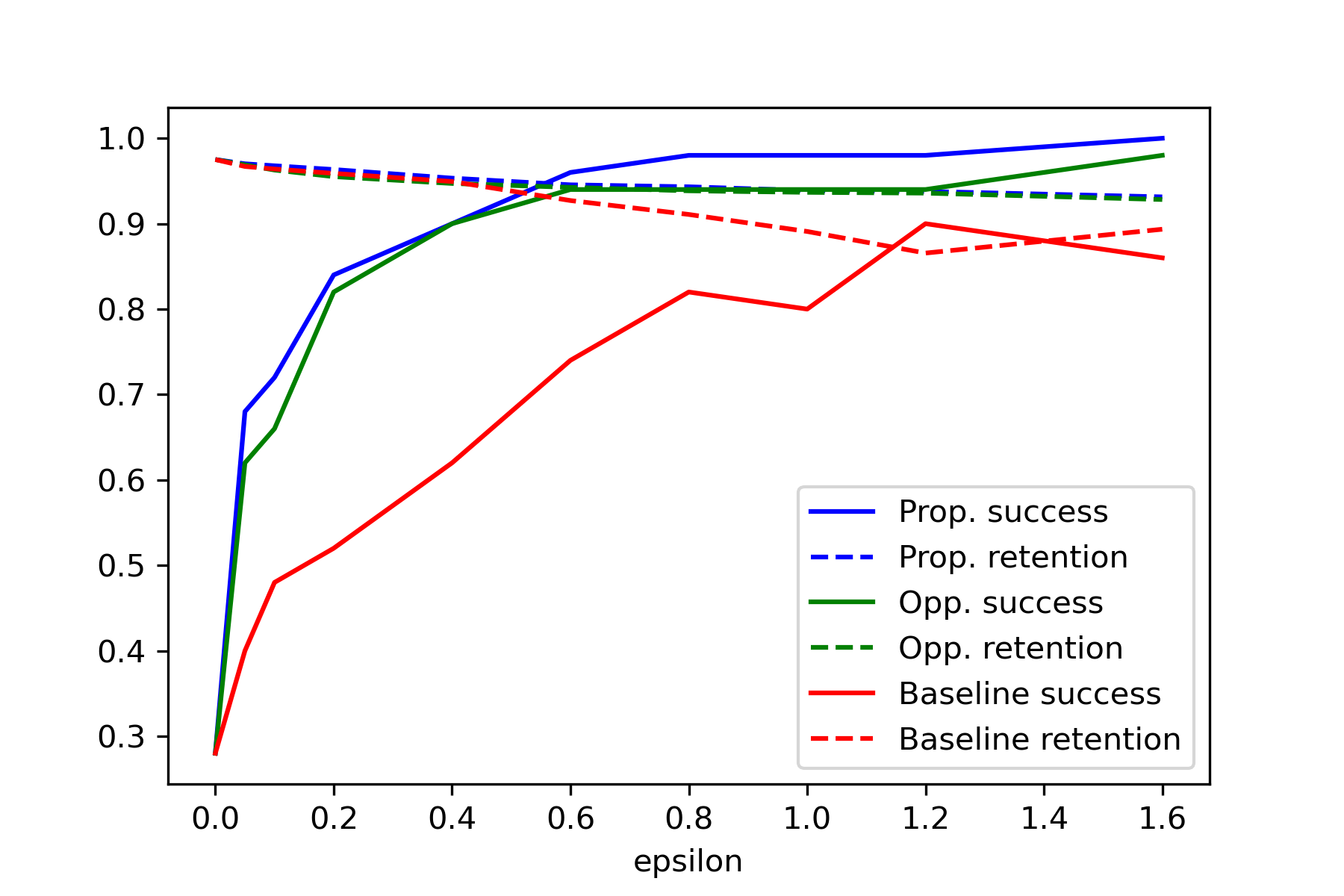}
\caption{}
\end{subfigure}
\begin{subfigure}[b]{0.49\textwidth}
\centering
\includegraphics[width=\textwidth]{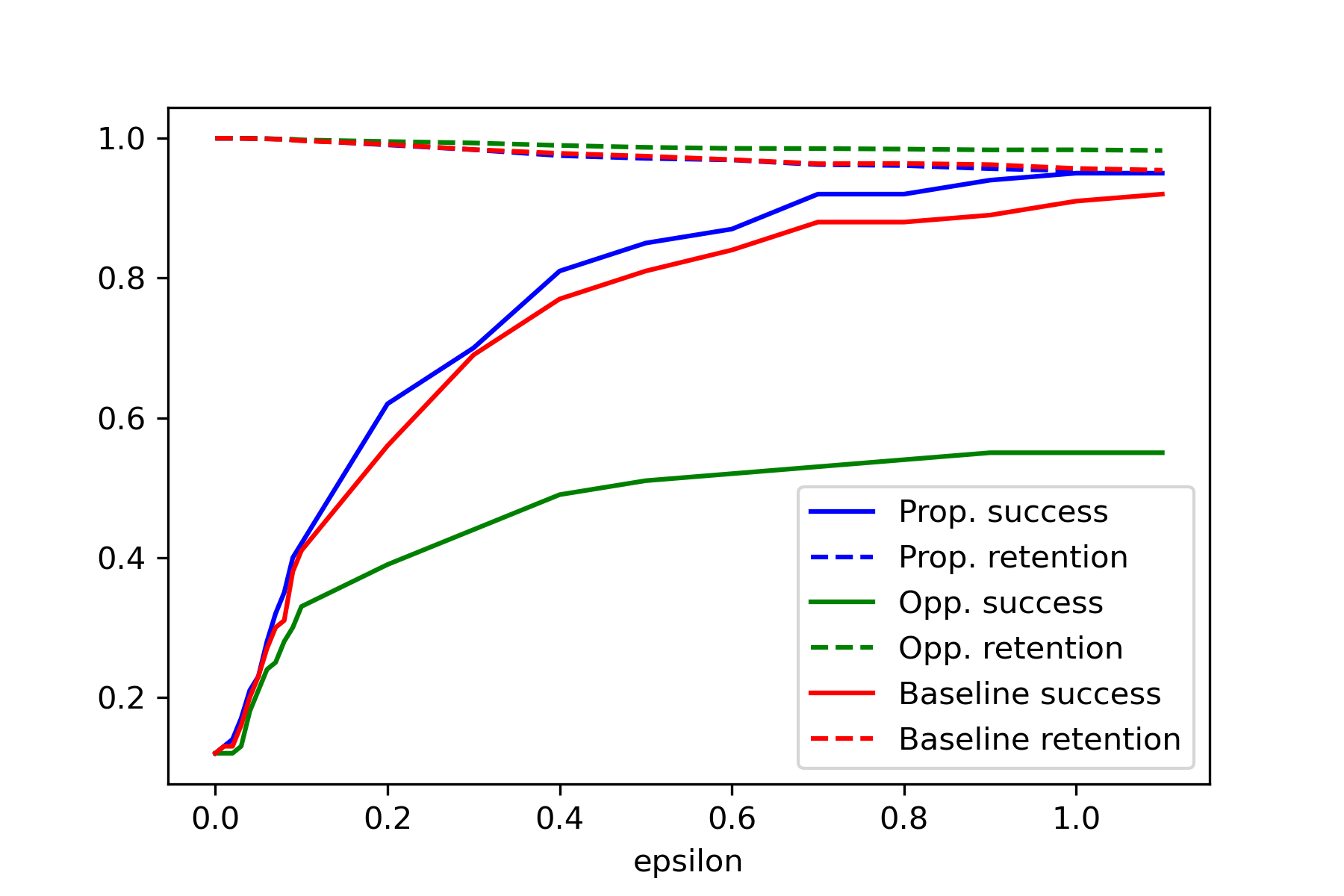}
\caption{}
\end{subfigure}
\caption{Success rate and retention for error correction. (a) BERT, (b) ResNet}
\label{fig:mispredictions}
\end{figure}

From Figure~\ref{fig:mispredictions} we see that Proponents-correction and Opponents-tuning strongly outperform the baseline in 
binary classification (SST2). For multi-class classification (CIFAR10) Opponents-tuning is not effective, as we verified that only 54\% of the retrieved opponents have the desired label; Proponents-correction still outperforms the baseline increasing on average the success rate by 2\% and reducing the number of steps to take by 6\%. We conjecture that for ResNet the improvement over the baseline is less than for BERT because of the worse predictive power of IF (Figure~\ref{fig:if_fading} (b)). 

In the Appendix we include additional plots showing the number of steps to correct mis-predictions as a function of the up-sampling parameter $\varepsilon$.

\section{Limitations}
We derive the perturbed training trajectory (Theorem~\ref{thm:exact_tracin}) and the divergence of trajectories (Theorem~\ref{thm:ode_continuity_bound}) for stochastic gradient descent and, while we sketch in the Appendix the modifications needed when using other optimizers, we do not pursue this topic in detail. In Section~\ref{sec:correction} we measure prediction retention after correcting mis-predictions. While this metric is intuitive and has been used previously (e.g.,~\cite{de-cao-etal-2021-editing}), we do not distinguish between semantically similar and unrelated examples and thus do not check the consistency and generalization properties of the update \cite{meng2022locating}, leaving a thorough study of the correction-retention tradeoff to future work.

\section{Conclusions}
IF have been regarded as a tool that promises to trace model behavior to the training data. Unfortunately, recent studies have found no empirical support for such a claim as IF fail to predict the LSOR effect. This finding gives rise to the question of what IF methods really predict and whether they could be useful for model debugging. In this work we clarified which questions IF can be expected to answer. We first identified problematic assumptions made by IF methods -- a priori any of these assumptions could be a reason for the observed empirical failure of IF. Thus, for each assumption we studied if it is indeed problematic. While most have turned out to be addressable in one way or another, we demonstrated that the one about parameter divergence puts a severe limitation on IF. With a deeper analysis of this assumption, we revised what can be theoretically expected from IF: IF methods are time-bound, that is, they can at most predict what happens when fine-tuning on a perturbed loss for a limited amount of time. With that, a practical usage of IF for model debugging is still possible -- we proposed and empirically validated a theoretically-grounded procedure to apply IF to correct model mis-predictions.

% \begin{ack}
% Use unnumbered first level headings for the acknowledgments. All acknowledgments
% go at the end of the paper before the list of references. Moreover, you are required to declare 
% funding (financial activities supporting the submitted work) and competing interests (related financial activities outside the submitted work). 
% More information about this disclosure can be found at: \url{https://neurips.cc/Conferences/2020/PaperInformation/FundingDisclosure}.

% Do {\bf not} include this section in the anonymized submission, only in the final paper. You can use the \texttt{ack} environment provided in the style file to autmoatically hide this section in the anonymized submission.
% \end{ack}
\bibliographystyle{alpha}
\bibliography{references}

\appendix
\section{Proof of Theorem~\ref{thm:implicit}}
For convenience, we first recall the statement of Theorem~\ref{thm:implicit}.
\begin{theorem*}
Assume that $\nabla\variation$ is $C^k$ ($k\ge 1$) and let $\theta_0\in\real N.$; assume that the Hessian $H_{\theta_0}=\nabla^2_{\theta|\theta_0} L$
is non-singular; then there exist neighborhoods $U$ of $\theta_0$ and $V$ of $0\in\real Q.$, and a $C^k$-function
$\Theta: V\to U$ such that $\Theta(0)=\theta_0$ and $\Theta(\varepsilon)\in U$ is the unique solution in $U$ of the equation
\begin{equation}
\label{eq:implicit_cond_appendix}
    \nabla_{\theta} \variation(\Theta(\varepsilon), \varepsilon) = \nabla_{\theta} \variation(\theta_0, 0).
\end{equation}
Moreover, the gradient of $\Theta$ at the origin is given by:
\begin{equation}
\label{eq:implicit_der_appendix}
    \nabla_{\varepsilon|0}\Theta = -H^{-1}_{\theta_0} \nabla^2_{(\varepsilon,\theta)|(0,\theta_0)}\variation.
\end{equation}
\end{theorem*}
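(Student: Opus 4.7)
The proof should be a fairly direct application of the Implicit Function Theorem (IFT); the key insight is just choosing the right function to implicitize. My plan is to define
\[
F(\theta,\varepsilon) := \nabla_\theta \variation(\theta,\varepsilon) - \nabla_\theta \variation(\theta_0, 0),
\]
so that $F(\theta_0, 0) = 0$ and the equation \eqref{eq:implicit_cond_appendix} we want to solve is exactly $F(\theta,\varepsilon) = 0$. Note this is the point that distinguishes the argument from the classical Cook--Weisberg setup: by subtracting the constant $\nabla_\theta \variation(\theta_0,0)$ rather than assuming it vanishes, we avoid requiring $\theta_0$ to be a stationary point of $L$, which is important because in the deep learning setting we cannot guarantee stationarity at an arbitrary checkpoint.

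Next I would verify the hypotheses of IFT. Smoothness is immediate: since $\nabla\variation$ is $C^k$, so is $F$. The Jacobian of $F$ with respect to $\theta$ at $(\theta_0, 0)$ is
\[
\partial_\theta F(\theta_0, 0) = \nabla^2_{\theta|\theta_0} \variation(\cdot, 0) = \nabla^2_{\theta|\theta_0} L = H_{\theta_0},
\]
where the middle equality uses that $\variation(\theta, 0) = L(\theta)$ by definition of a variation. By hypothesis $H_{\theta_0}$ is non-singular, so the IFT applies and yields neighborhoods $V$ of $0$ and $U$ of $\theta_0$ and a unique $C^k$ function $\Theta: V \to U$ with $\Theta(0) = \theta_0$ solving $F(\Theta(\varepsilon),\varepsilon) = 0$ on $V$.

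To get the derivative formula \eqref{eq:implicit_der_appendix}, I would differentiate the identity $F(\Theta(\varepsilon),\varepsilon) \equiv 0$ with respect to $\varepsilon$ at $\varepsilon = 0$ via the chain rule:
\[
\nabla^2_{\theta|\theta_0} \variation \cdot \nabla_{\varepsilon|0}\Theta + \nabla^2_{(\varepsilon,\theta)|(0,\theta_0)}\variation = 0,
\]
and solve for $\nabla_{\varepsilon|0}\Theta$ using invertibility of $H_{\theta_0}$. (One has to be a little careful with the shape convention: $\nabla_{\varepsilon|0}\Theta$ is $Q\times N$ and the cross Hessian is $Q\times N$, so the equation balances correctly once we fix an ordering.)

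There is essentially no deep obstacle here, since IFT does all the heavy lifting; the only thing to be careful about is the bookkeeping on shapes and on the fact that $\partial_\theta F = \nabla^2_\theta \variation(\cdot, 0)$ is really just $H_{\theta_0}$ because $\variation$ restricts to $L$ at $\varepsilon=0$. The slightly non-standard aspect --- and the only reason this needs to be stated as a new theorem rather than cited --- is that previous HIF analyses used IFT on the first-order optimality condition $\nabla_\theta \variation = 0$, which forces $\theta_0$ to be a stationary point. Here we implicitize the level set $\{\nabla_\theta \variation = \nabla_\theta \variation(\theta_0, 0)\}$ instead, which removes the stationarity requirement while preserving the Hessian-based derivative formula that downstream HIF computations rely on.
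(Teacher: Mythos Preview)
Your proposal is correct and follows essentially the same approach as the paper's proof: apply the Implicit Function Theorem to the constraint $\nabla_\theta\variation(\theta,\varepsilon)=\nabla_\theta\variation(\theta_0,0)$, using that the $\theta$-Jacobian at $(\theta_0,0)$ is the non-singular $H_{\theta_0}$, and then differentiate the resulting identity in $\varepsilon$ at $0$ to obtain~\eqref{eq:implicit_der_appendix}. Your write-up is in fact slightly more explicit than the paper's (you name the auxiliary function $F$ and spell out why $\partial_\theta F(\theta_0,0)=H_{\theta_0}$ via $\variation(\cdot,0)=L$), but the argument is the same.
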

\begin{proof}
Equation~\eqref{eq:implicit_cond_appendix} gives us $N$ constraints on the $N+Q$ variables $(\theta, \varepsilon)$ and
we want to solve them for the first $N$ variables $\theta$; to 
obtain the function $\Theta$ we invoke the Implicit Function Theorem, using that the Jacobian wrt.~the variables
we want to solve for is $H_{\theta_0}$ and is therefore non-singular. Finally~\eqref{eq:implicit_der_appendix} is obtained by
taking the gradient of~\eqref{eq:implicit_cond_appendix} wrt.~$\varepsilon$ and setting $\varepsilon=0$:
\begin{equation*}
\nabla^2_{\theta|\theta_0} L \cdot \nabla_{\varepsilon|0}\Theta + \nabla^2_{(\varepsilon,\theta)|(0,\theta_0)}\variation = 0.
\end{equation*}
\end{proof}

\section{Proof of Corollary~\ref{cor:implicit_consequences}}
For convenience, we first recall the statement of Corollary~\ref{cor:implicit_consequences}.
\begin{corollary*}
Under the assumptions of Theorem~\ref{thm:implicit}:
\begin{enumerate}
    \item If $\theta_0$ is a stationary point of $L$, then each $\Theta(\varepsilon)$
    is a stationary point of the loss $\theta\mapsto \variation(\theta, \varepsilon)$.
    \item If $\theta_0$ is a local (strict) minimum of $L$, for $\varepsilon$ sufficiently small,
    $\Theta(\varepsilon)$ is a local (strict) minimum of the loss $\theta\mapsto \variation(\theta, \varepsilon)$.
    \item Let $Q=1$ (hence epsilon is a scalar)
    with $\variation(\theta,\varepsilon)=L(\theta) + \varepsilon l_x(\theta)$, where
    $l_x$ is the loss corresponding to a specific training point $x$. We then obtain the
    classical result~\cite{cook-weisberg-1982} about the influence of down/up-sampling $x$ on the training parameters:
    \begin{equation}
        \label{eq:classical_upsampling_appendix}
        \frac{d\Theta}{d\varepsilon}\Big|_{\varepsilon=0} = - (\nabla^2_\theta L(\theta_0))^{-1}\nabla_{\theta}l_x(\theta_0).
    \end{equation}
\end{enumerate}
\end{corollary*}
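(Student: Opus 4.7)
The plan is to obtain all three claims as almost-immediate consequences of Theorem~\ref{thm:implicit}, with essentially no new computation beyond substitution and one short continuity argument. I would tackle them in the order Part~1, Part~3, Part~2, since Part~1 establishes the stationarity that Part~2 then strengthens, and Part~3 is a completely independent specialisation of~\eqref{eq:implicit_der}.

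For Part~1, I would start from the identity $\nabla_\theta \variation(\Theta(\varepsilon),\varepsilon)=\nabla_\theta \variation(\theta_0,0)$ supplied by Theorem~\ref{thm:implicit}, use the defining property $\variation(\theta,0)=L(\theta)$ to rewrite the right-hand side as $\nabla L(\theta_0)$, and invoke the stationarity hypothesis to conclude that this vanishes. The left-hand side then expresses precisely that $\Theta(\varepsilon)$ is a stationary point of $\theta\mapsto\variation(\theta,\varepsilon)$ for every $\varepsilon\in V$. For Part~3 I would substitute $\variation(\theta,\varepsilon)=L(\theta)+\varepsilon l_x(\theta)$ directly into~\eqref{eq:implicit_der}: the mixed partial $\nabla^2_{(\varepsilon,\theta)|(0,\theta_0)}\variation$ collapses to $\nabla_\theta l_x(\theta_0)$ (differentiating first in $\varepsilon$ gives $l_x(\theta)$, then in $\theta$ gives its gradient), while $H_{\theta_0}=\nabla^2_\theta L(\theta_0)$ because the perturbation is affine in $\varepsilon$, reproducing~\eqref{eq:classical_upsampling} verbatim.

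Part~2 is the only place requiring a short perturbation argument. By Part~1 we already know $\Theta(\varepsilon)$ is a stationary point of $\variation(\cdot,\varepsilon)$, so the task reduces to checking that the Hessian there is positive definite for small~$\varepsilon$. A strict local minimum at $\theta_0$ makes $H_{\theta_0}$ positive semi-definite, and the non-singularity hypothesis already carried over from Theorem~\ref{thm:implicit} rules out zero eigenvalues, upgrading this to $H_{\theta_0}\succ 0$. Since $\nabla\variation\in C^k$ with $k\ge 1$ and $\Theta$ is continuous, the map $\varepsilon\mapsto\nabla^2_\theta \variation(\Theta(\varepsilon),\varepsilon)$ is continuous at the origin, so by continuity of eigenvalues it remains positive definite on a possibly smaller sub-neighbourhood of $0\in\real Q.$, giving the strict local minimum.

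There is no real obstacle here since the corollary is essentially bookkeeping on top of Theorem~\ref{thm:implicit}; the only step I would be careful to spell out is the upgrade from positive semi-definite to strictly positive definite in Part~2, which crucially combines the minimisation hypothesis with the non-singularity hypothesis inherited from the theorem---without the latter, one could not rule out an eigenvalue drifting through zero as $\varepsilon$ moves away from the origin.
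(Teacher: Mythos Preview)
Your proposal is correct and follows essentially the same approach as the paper: Part~1 by substituting the stationarity hypothesis into~\eqref{eq:implicit_cond}, Part~2 by a continuity argument on the Hessian along $\varepsilon\mapsto\Theta(\varepsilon)$, and Part~3 by direct specialisation of~\eqref{eq:implicit_der}. Your treatment of Part~2 is in fact slightly more careful than the paper's, which simply asserts that a strict local minimum forces $H_{\theta_0}\succ 0$ without noting (as you do) that this needs the non-singularity hypothesis to rule out the $x^4$-type degeneracy.
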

\begin{proof}
If $\theta_0$ is a stationary point of $L$, then $\nabla_{\theta} \variation(\theta_0, 0)=0$ in~\eqref{eq:implicit_cond}.
This implies that $\nabla_{\theta} \variation(\Theta(\varepsilon), \varepsilon)=0$ in~\eqref{eq:implicit_cond}, which is exactly the statement that $\Theta(\varepsilon)$ is a stationary point of $\variation$. If $\theta_0$ is a local (strict) minimum of $L$, it is not just a stationary point, but the Hessian at $\theta_0$ is also positive definite. By continuity, for sufficiently small $\varepsilon$, also the Hessian $\nabla^2_{\theta|\Theta(\varepsilon)}\variation$ will be positive definite so that $\Theta(\varepsilon)$ will be a local (strict) minimum. Finally, \eqref{eq:classical_upsampling_appendix}
follows from applying \eqref{eq:implicit_cond} to the variation $\variation(\theta,\varepsilon)=L(\theta) + \varepsilon l_x(\theta)$.
\end{proof}

\section{Proof of Theorem~\ref{thm:exact_tracin}}
For convenience, we first recall the statement of Theorem~\ref{thm:exact_tracin}.
\begin{theorem*}
Assume that the
model is trained for $T$ time-steps with stochastic gradient descent. Denoting by $H_{t}$ the Hessian
$\nabla^2_{\theta|\theta_{0,t}}L_t$, then the final parameters $\theta_{\varepsilon,T}$ satisfy:
\begin{equation}
    \label{eq:correct_tracin_general_appendix}
    \nabla_{\varepsilon|0}\theta_{\varepsilon,T} = -\sum_{t=0}^{T-1}\eta_t \nabla^2_{(\varepsilon,\theta)|(0,\theta_{0,t})}\variation_t - \sum_{t=0}^{T-1}\eta_t H_t \nabla_{\varepsilon|0}\theta_{\varepsilon, t}.
\end{equation}
\end{theorem*}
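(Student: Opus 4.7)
The plan is to prove the formula by induction (equivalently, by telescoping the one-step recursion) starting from the SGD update. For each $t$, the update rule reads
\begin{equation*}
\theta_{\varepsilon,t+1} = \theta_{\varepsilon,t} - \eta_t\, \nabla_\theta \variation_t(\theta_{\varepsilon,t}, \varepsilon),
\end{equation*}
with the initial condition $\theta_{\varepsilon,0} = \initparams$, which is constant in $\varepsilon$. I would first apply $\nabla_\varepsilon$ to both sides and evaluate at $\varepsilon=0$; the only delicate point is the chain rule applied to $\nabla_\theta \variation_t(\theta_{\varepsilon,t}, \varepsilon)$, which splits into a direct partial with respect to $\varepsilon$ and an indirect partial via $\theta_{\varepsilon,t}$.

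After applying the chain rule, I would observe that at $\varepsilon=0$ we have $\variation_t(\theta,0) = L_t(\theta)$, so $\nabla^2_{\theta|\theta_{0,t}} \variation_t$ evaluated at $\varepsilon=0$ coincides with $H_t = \nabla^2_{\theta|\theta_{0,t}} L_t$. This gives the one-step recursion
\begin{equation*}
\nabla_{\varepsilon|0}\theta_{\varepsilon,t+1} - \nabla_{\varepsilon|0}\theta_{\varepsilon,t} = -\eta_t \nabla^2_{(\varepsilon,\theta)|(0,\theta_{0,t})}\variation_t - \eta_t H_t \nabla_{\varepsilon|0}\theta_{\varepsilon,t}.
\end{equation*}

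Finally I would sum this identity telescopically for $t=0,\dots,T-1$; the left-hand side collapses to $\nabla_{\varepsilon|0}\theta_{\varepsilon,T} - \nabla_{\varepsilon|0}\theta_{\varepsilon,0}$, and the second term vanishes by the constancy of the initial condition. The right-hand side is exactly the claimed formula~\eqref{eq:correct_tracin_general_appendix}.

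There is no real obstacle: the argument is essentially bookkeeping. The only place to be careful is the identification of $\nabla^2_{\theta}\variation_t$ with $H_t$ at $\varepsilon=0$ (which justifies writing the recursion in terms of $H_t$ rather than the $\theta$-Hessian of $\variation_t$), and the need to keep $\theta_{0,t}$ (the unperturbed trajectory) as the evaluation point for both Hessian terms on the right-hand side, since all $\varepsilon$-derivatives are taken at $\varepsilon=0$. An implicit smoothness assumption on $\variation_t$ (sufficient regularity to apply the chain rule and to exchange $\nabla_\varepsilon$ with the finite sum defining the SGD iterates) is also needed, but this is standard and matches the differentiability setting already invoked in Theorem~\ref{thm:implicit}.
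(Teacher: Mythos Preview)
Your proposal is correct and follows essentially the same approach as the paper: write the SGD recurrence, differentiate in $\varepsilon$ at the origin using the chain rule, and sum over time steps. The only cosmetic difference is that the paper first telescopes the recurrence into a single sum and then applies $\nabla_{\varepsilon|0}$, whereas you differentiate the one-step update first and telescope afterwards; your treatment of the chain rule and the identification of $\nabla^2_\theta\variation_t$ with $H_t$ at $\varepsilon=0$ is in fact more explicit than the paper's.
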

\begin{proof}
The parameters $\theta_{\varepsilon,t}$ obey a recurrence relation:
\begin{equation}
    \theta_{\varepsilon, t} - \theta_{\varepsilon, t-1} = -\eta_{t-1}\nabla_\theta \variation_{t-1}(\theta_{\varepsilon, t-1}, \varepsilon),
\end{equation}
which can be solved to give 
\begin{equation}
    \theta_{\varepsilon, T}= -\sum_{t=0}^{T-1}\eta_{t}\nabla_\theta \variation_{t}(\theta_{\varepsilon, t}, \varepsilon);
\end{equation}
then~\eqref{eq:correct_tracin_general_appendix} follows immediately by applying $\nabla_{\varepsilon|0}$, i.e.~computing the Jacobian wrt.~$\varepsilon$ at the origin.
\end{proof}

\section{Proof of Theorem~\ref{thm:ode_continuity_bound}}
The first step in the proof of Theorem~\ref{thm:ode_continuity_bound} is a discrete version of Gronwall's Lemma~\cite{gronwall-lemma}.

\begin{lemma}
\label{lem:gronwall}
Let $\{u_t\}$, $\{\alpha_t\}$ and $\{\beta_t\}$ be sequences such that $\beta_t\ge 0$ and
\begin{equation}
    \label{eq:integral_bound}
    u_t \le \alpha_t + \sum_{s=0}^{t-1} \beta_s u_s.
\end{equation}
Note that we assume that~\eqref{eq:integral_bound} holds for $t\ge 1$ and we consider $u_0$ an initial condition.
Then:
\begin{equation}
    \label{eq:gronwall_full}
    u_T \le \alpha_T + \beta_0 \prod_{s=1}^{T-1}(1+\beta_s)u_0 + \sum_{s=1}^{T-1} \alpha_s\beta_s \prod_{k=s+1}^{T-1} (1+\beta_k).
\end{equation}
If $\{\alpha_t\}$ is non-decreasing in $t$ we then get:
\begin{equation}
    \label{eq:gronwall_reduced}
     u_T \le \beta_0 \prod_{s=1}^{T-1}(1+\beta_s)u_0 + \alpha_T(1 + \sum_{s=1}^{T-1} \beta_s \prod_{k=s+1}^{T-1} (1+\beta_k)).
\end{equation}
Moreover, defining $u_t$ by setting~\eqref{eq:integral_bound} to be an equality, shows that~\eqref{eq:gronwall_full} is sharp.
\end{lemma}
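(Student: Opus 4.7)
The plan is to reduce the integral-type inequality to a first-order linear recurrence by introducing the auxiliary partial-sum sequence $v_t = \sum_{s=0}^{t-1}\beta_s u_s$, solve that recurrence explicitly with a discrete integrating factor, and recover the desired bound on $u_T$ via the hypothesis $u_t \le \alpha_t + v_t$. Sharpness will then fall out from the observation that every step in the derivation becomes an equality as soon as~\eqref{eq:integral_bound} is turned into one.

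Concretely, from $v_{t+1} - v_t = \beta_t u_t$ together with $\beta_t \ge 0$ and the hypothesis, for $t \ge 1$ one gets
\begin{equation*}
v_{t+1} \le (1+\beta_t) v_t + \beta_t \alpha_t, \qquad v_1 = \beta_0 u_0.
\end{equation*}
Setting $P_t = \prod_{s=1}^{t-1}(1+\beta_s)$ (with the empty product $P_1 = 1$), the natural discrete integrating factor is $1/P_{t+1}$: dividing the recurrence through by $P_{t+1} = (1+\beta_t) P_t$ makes the inequality telescope into $v_{t+1}/P_{t+1} \le v_t/P_t + \beta_t \alpha_t / P_{t+1}$. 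Summing from $t=1$ to $T-1$ and multiplying by $P_T$ then yields
\begin{equation*}
v_T \le \beta_0 \Bigl(\prod_{s=1}^{T-1}(1+\beta_s)\Bigr) u_0 + \sum_{t=1}^{T-1} \beta_t \alpha_t \prod_{s=t+1}^{T-1}(1+\beta_s),
\end{equation*}
and substituting this into $u_T \le \alpha_T + v_T$ gives~\eqref{eq:gronwall_full}.

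The reduced form~\eqref{eq:gronwall_reduced} is an immediate consequence of monotonicity: since $\alpha_t \le \alpha_T$ for $t \le T-1$, the sum in~\eqref{eq:gronwall_full} is bounded above by $\alpha_T \sum_{t=1}^{T-1}\beta_t \prod_{s=t+1}^{T-1}(1+\beta_s)$, and absorbing the standalone $\alpha_T$ produces the bracketed factor in~\eqref{eq:gronwall_reduced}. For sharpness, defining $u_t := \alpha_t + \sum_{s=0}^{t-1}\beta_s u_s$ recursively turns every inequality in the argument above into an equality, so~\eqref{eq:gronwall_full} is attained on this $\{u_t\}$. The only step requiring real care is bookkeeping of the product and sum indices at the edge cases $t=0$, $t=1$, and $t=T-1$, to ensure that the ranges produced by the integrating-factor calculation match those in the stated formula; beyond that, the argument is a routine discrete analogue of the variation-of-constants proof of continuous Gronwall.
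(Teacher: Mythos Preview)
Your argument is correct. It is essentially the same computation as the paper's, just organized a little differently: the paper first replaces $\{u_t\}$ by the majorizing sequence obtained by turning~\eqref{eq:integral_bound} into an equality and then works with $w_t := v_t - \alpha_t$, whereas you work directly with the partial sum $v_t = \sum_{s<t}\beta_s u_s$; both choices lead to the identical first-order recurrence $w_{t+1} \le (1+\beta_t)w_t + \beta_t\alpha_t$ with $w_1 = \beta_0 u_0$. The paper then unrolls this recurrence by a two-step induction, while you solve it via the discrete integrating factor $P_t^{-1}$ and a telescoping sum --- two standard, equivalent ways of closing the same linear recurrence, and your integrating-factor presentation arguably makes the index bookkeeping (the $\prod_{k=s+1}^{T-1}$ ranges) slightly more transparent.
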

\begin{proof}
We first observe that if we set $v_0=u_0$ and build $v_t$ by declaring~\eqref{eq:integral_bound} to be an equality,
then $v_t\ge u_t$ for any $t$. This is true by induction as:
\begin{equation}
    u_{t+1} - \alpha_{t+1} \le \sum_{s=0}^t \beta_s u_s \le \sum_{s=0}^t \beta_s v_s = v_{t+1} - \alpha_{t+1}.
\end{equation}
We thus focus on bounding $v_{t+1}$; we note that
\begin{equation}
    (v_{t+1} - \alpha_{t+1}) - (v_t - \alpha_t) = \beta_t (v_t - \alpha_t) + \beta_t\alpha_t;
\end{equation}
thus 
\begin{equation}
\begin{split}
  v_{t+1} - \alpha_{t+1} &= (1+\beta_t)(v_t - \alpha_t)  + \beta_t \alpha_t \\
                         &= (1+\beta_t)(1+\beta_{t-1})(v_{t-1} - \alpha_{t-1}) + (1+\beta_t)\beta_{t-1}\alpha_{t-1} + \beta_t \alpha_t;
\end{split}
\end{equation}
then~\eqref{eq:gronwall_full} follows by induction and letting $t+1=T$. Finally, if $\{\alpha_t\}$ is non-decreasing in $t$ we
can simply replace $\alpha_s$ with $\alpha_T$ obtaining~\eqref{eq:gronwall_reduced}. The sharpness follows by considering the sequence 
$\{v_t\}$ we defined in the proof.
\end{proof}

For convenience, we first recall the statement of Theorem~\ref{thm:ode_continuity_bound}.

\begin{theorem*}
In the setting of Theorem~\ref{thm:exact_tracin}
assume that, for $t\le T$, $\theta_{\varepsilon,t}$ lies in a bounded region $R$ such that
\begin{equation}
    \sup_{\varepsilon, \theta\in R}\|\nabla \variation_t(\theta,\varepsilon) - \nabla \variation(\theta,0)\| \le C\varepsilon
\end{equation}
and that for $\theta\in R$ each loss $\variation_t(\theta,\varepsilon)$ and its gradient wrt.~$\theta$ are $A$-Lipschitz in $\theta$. Then
\begin{equation}\label{eq:exp_bound_appendix}
    \|\theta_{\varepsilon,T} - \theta_{0,T}\| \le C\varepsilon \sum_{s < T}\eta_s (1 + \exp(2A\sum_{s<T}\eta_s)).
\end{equation}
\end{theorem*}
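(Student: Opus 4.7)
The strategy is to derive a discrete Gronwall-type recurrence for the parameter divergence $u_t := \|\theta_{\varepsilon,t} - \theta_{0,t}\|$ and then apply Lemma~\ref{lem:gronwall} to obtain the exponential bound. The loss-Lipschitz hypothesis is used implicitly to justify staying inside $R$; the essential ingredient for the argument itself is that $\nabla_\theta \variation_t(\cdot,0)$ is $A$-Lipschitz and the perturbation bound $\|\nabla \variation_t(\theta,\varepsilon) - \nabla \variation_t(\theta,0)\| \le C\varepsilon$.

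First I would subtract the two SGD updates:
\begin{equation*}
\theta_{\varepsilon,t+1} - \theta_{0,t+1} = (\theta_{\varepsilon,t} - \theta_{0,t}) - \eta_t\bigl(\nabla_\theta \variation_t(\theta_{\varepsilon,t}, \varepsilon) - \nabla_\theta \variation_t(\theta_{0,t}, 0)\bigr),
\end{equation*}
and split the bracketed difference of gradients as
\begin{equation*}
\bigl[\nabla_\theta \variation_t(\theta_{\varepsilon,t},\varepsilon) - \nabla_\theta \variation_t(\theta_{\varepsilon,t},0)\bigr] + \bigl[\nabla_\theta \variation_t(\theta_{\varepsilon,t},0) - \nabla_\theta \variation_t(\theta_{0,t},0)\bigr].
\end{equation*}
The first bracket has norm at most $C\varepsilon$ by the perturbation assumption, and the second is at most $A\,u_t$ by the gradient-Lipschitz assumption. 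The triangle inequality then yields
\begin{equation*}
u_{t+1} \le (1 + A\eta_t)\, u_t + C\varepsilon\, \eta_t.
\end{equation*}

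Summing this recurrence from $s=0$ up to $t-1$ and using $u_0 = 0$ (both trajectories start from the same $\initparams$) puts the inequality into the integral form required by Lemma~\ref{lem:gronwall}:
\begin{equation*}
u_t \le \alpha_t + \sum_{s<t} \beta_s u_s,\qquad \alpha_t = C\varepsilon \sum_{s<t}\eta_s,\quad \beta_s = A\eta_s.
\end{equation*}
Since $\alpha_t$ is non-decreasing, the reduced conclusion~\eqref{eq:gronwall_reduced} (with $u_0=0$) reduces to
\begin{equation*}
u_T \le \alpha_T\Bigl(1 + \sum_{s=1}^{T-1} A\eta_s \prod_{k=s+1}^{T-1}(1 + A\eta_k)\Bigr).
\end{equation*}
Bounding each product via $1+x \le e^x$ gives $\prod_{k=s+1}^{T-1}(1+A\eta_k) \le \exp(A\sum_{s<T}\eta_s)$, so the inner sum is at most $A\sum_{s<T}\eta_s \cdot \exp(A\sum_{s<T}\eta_s)$.

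The final simplification—and the place where the factor of $2$ in the exponent appears—is the elementary inequality $x e^{x} \le e^{2x}$ for $x \ge 0$ (equivalent to $x \le e^{x}$), applied with $x = A\sum_{s<T}\eta_s$; this collapses the bracket into $1 + \exp(2A\sum_{s<T}\eta_s)$, producing exactly~\eqref{eq:exp_bound_appendix}. I do not expect any real obstacles: the only points where one must be careful are (i) the clean two-source decomposition of the gradient difference so that the hypotheses are used in exactly the roles they were designed for, and (ii) invoking $xe^x\le e^{2x}$ rather than the coarser $x\le e^x$ so that the resulting constant matches the claimed $2A$.
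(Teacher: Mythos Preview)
Your proposal is correct and follows essentially the same route as the paper: the paper also sets $u_t=\|\theta_{\varepsilon,t}-\theta_{0,t}\|$, $\alpha_t=C\varepsilon\sum_{s<t}\eta_s$, $\beta_s=A\eta_s$, invokes Lemma~\ref{lem:gronwall} with $u_0=0$ and non-decreasing $\alpha_t$, and then uses $1+x\le e^x$ followed by $x\le e^x$ (i.e.\ your $xe^x\le e^{2x}$) to absorb the sum into a second exponential. The only cosmetic difference is that the paper sums the SGD updates first and takes norms of the telescoped expression, whereas you derive the one-step recurrence $u_{t+1}\le(1+A\eta_t)u_t+C\varepsilon\eta_t$ and then sum; both yield the identical Gronwall premise.
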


\begin{proof}
The idea is to apply Lemma~\ref{lem:gronwall} as we would in the case of a standard ODE. Let us compare the evolution of $\theta_{\varepsilon,t}$ and $\theta_{0,t}$:
\begin{align}
    \theta_{\varepsilon,t} &= \initparams - \sum_{s=0}^{t-1}\eta_s \nabla_\theta \variation (\theta_{\varepsilon,s}, \varepsilon) \\
    \theta_{0,t} &= \initparams - \sum_{s=0}^{t-1}\eta_s \nabla_\theta \variation (\theta_{0,s}, 0);
\end{align}
which leads to
\begin{equation}
    \begin{split}
       \| \theta_{\varepsilon, t} - \theta_{0,t} \| & \le \sum_{s=0}^{t-1}\eta_s \| \nabla_\theta \variation (\theta_{\varepsilon,s}, \varepsilon)
       - \nabla_\theta \variation (\theta_{0,s}, 0) \| \\
       & \le C\varepsilon\sum_{s=0}^{t-1}\eta_s + \sum_{s=0}^{t-1}\eta_s A \| \theta_{\varepsilon,s} - \theta_{0,s} \|;
    \end{split}
\end{equation}
we now just need to rephrase this inequality in terms of Lemma~\ref{lem:gronwall}:
\begin{equation}
       \underbrace{\| \theta_{\varepsilon,t} - \theta_{0,t} \|}_{u_t} \le
     \underbrace{C\varepsilon\sum_{s=0}^{t-1}\eta_s}_{\alpha_t} + \sum_{s=0}^{t-1}
       \underbrace{\eta_s A}_{\beta_s} \underbrace{\| \theta_{\varepsilon,s} - \theta_{0,s} \|}_{u_s};
\end{equation}
we note that $u_0=0$ as both dynamics start at $\initparams$ and that $\alpha_t$ is non-decreasing in $t$.
We then have that
\begin{equation}
\begin{split}
    u_T &\le \alpha_T(1 + \sum_{s=1}^{T-1} \beta_s \prod_{k=s+1}^{T-1} (1+\beta_k)) \\
    & \le \alpha_T(1 + \sum_{s=1}^{T-1} \beta_s \prod_{k=s+1}^{T-1} \exp(\beta_k)) \\
    & \le \alpha_T(1 + \exp(\sum_{s=1}^{T-1}\beta_k)\sum_{s=1}^{T-1} \beta_s ) \\
     &\le \alpha_T(1 + \exp(\sum_{s=1}^{T-1}\beta_k) \times  \exp(\sum_{s=1}^{T-1}\beta_k) ) \\
     & \le \alpha_T(1 + \exp(\sum_{s=1}^{T-1}2\beta_k)),
\end{split}
\end{equation}
which is~\eqref{eq:exp_bound}.
\end{proof}

\subsection{Sketching modifications needed in the case of optimizers}
The proofs of Theorems~\ref{thm:exact_tracin} and~\ref{thm:ode_continuity_bound} were given for SGD. The argument in the case of using an optimizer would be more involved. Here we sketch the modifications needed when dealing with optimizers. An optimizer is characterized by an \emph{optimizer state}, $\sigma_{\varepsilon, t}$, which will also evolve in time. While for SGD we just considered the update rule for $\theta_{\varepsilon,t}$, in the case of optimizers one needs to study a joint system of update rules for the parameters and the optimizer state:
\begin{align}
    \theta_{\varepsilon, t} - \theta_{\varepsilon, t-1} &= -\eta_{t-1} F(\nabla_\theta \variation_{t-1}(\theta_{\varepsilon, t-1}, \varepsilon),
    \sigma_{\varepsilon,t}) \\
    \sigma_{\varepsilon,t} - \sigma_{\varepsilon,t-1} &= -\rho_{t-1} G(\nabla_\theta \variation_{t-1}(\theta_{\varepsilon, t-1}, \varepsilon),
    \sigma_{\varepsilon,t-1}).
\end{align}
In the case of Theorem~\ref{thm:exact_tracin} one would then apply $\nabla_{\varepsilon|0}$ to the joint system to obtain the update rule. For Theorem~\ref{thm:ode_continuity_bound} one should add additional continuity in $\varepsilon$ and Lipschitz conditions on $F$ and $G$; one should then check that these are indeed satisfied for common optimizers like Adam or Adafactor.

\section{Why do we use the Conjugate Residual method?}
Regarding HIF, note that usually the Conjugate Gradient method is used~\cite{koh-liang-if-2017} for computing inverse Hessian vector products. However, in our experiments the Conjugate Gradient method always yielded a time-series of Pearson Correlations $R(t)$ oscillating around $0$, \emph{thus without any predictive power}. The reason is that this method assumes the Hessian to be positive-definite, which is not the case for most Neural Networks. A simple fix to the problem is to use the Conjugate Residual method which does not require the Hessian to be positive definite. We recommend to use the Conjugate Residual when applying HIF in Deep Learning; \emph{this might look like a minor technical point, but it can avoid reporting that HIF has no predictive power, when instead the issue lies in the numerical method used to compute inverse Hessian vector products}.

For more discussion on the Conjugate Residual method see~\href{https://en.wikipedia.org/wiki/Conjugate\_residual\_method}{its Wikipedia article}.

\section{Further empirical results on Fading of Influence scores}
\begin{figure}%[!htp]
\centering
\includegraphics[width=\textwidth]{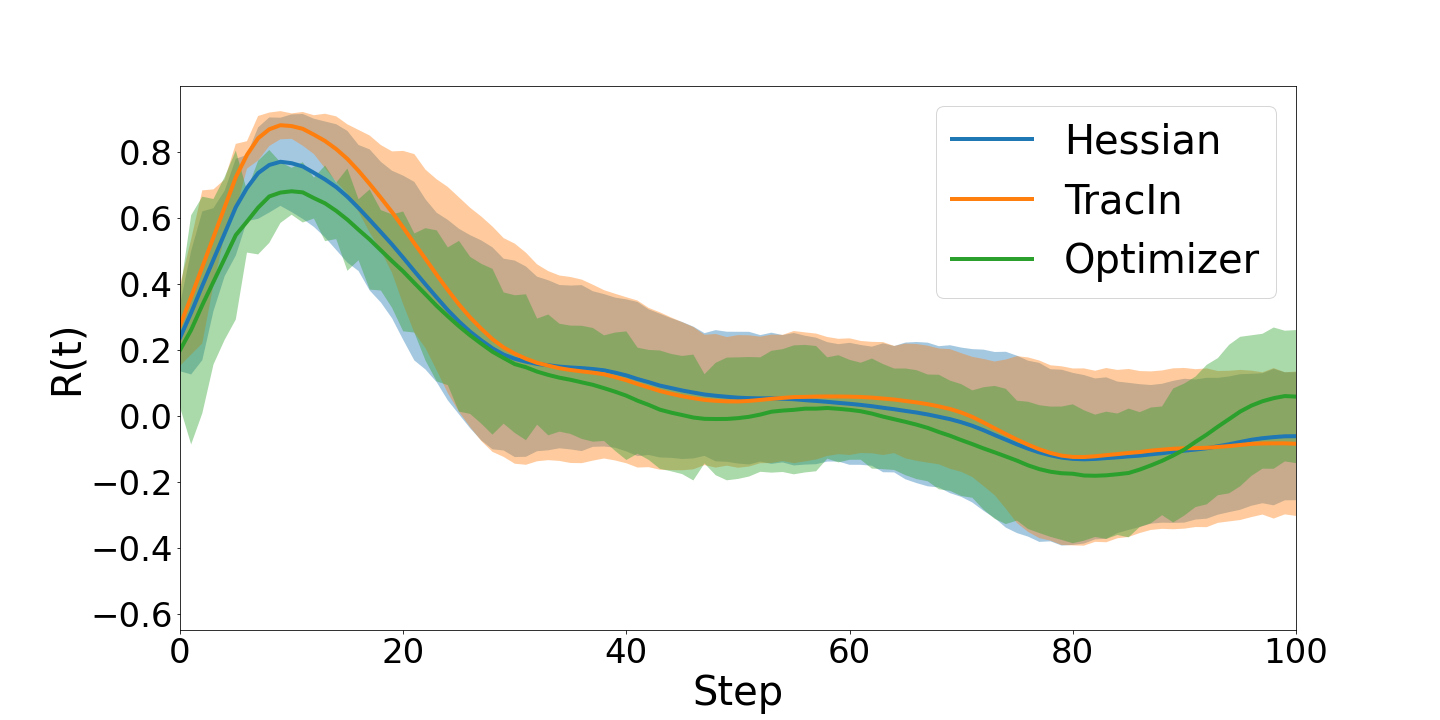}
\caption{For BERT the predictive power of influence scores on the loss shifts degrades over time very quickly.}
\label{fig:if_fading_bert_zoomin}
\end{figure}

\begin{figure}%[!htp]
\centering
\includegraphics[width=\textwidth]{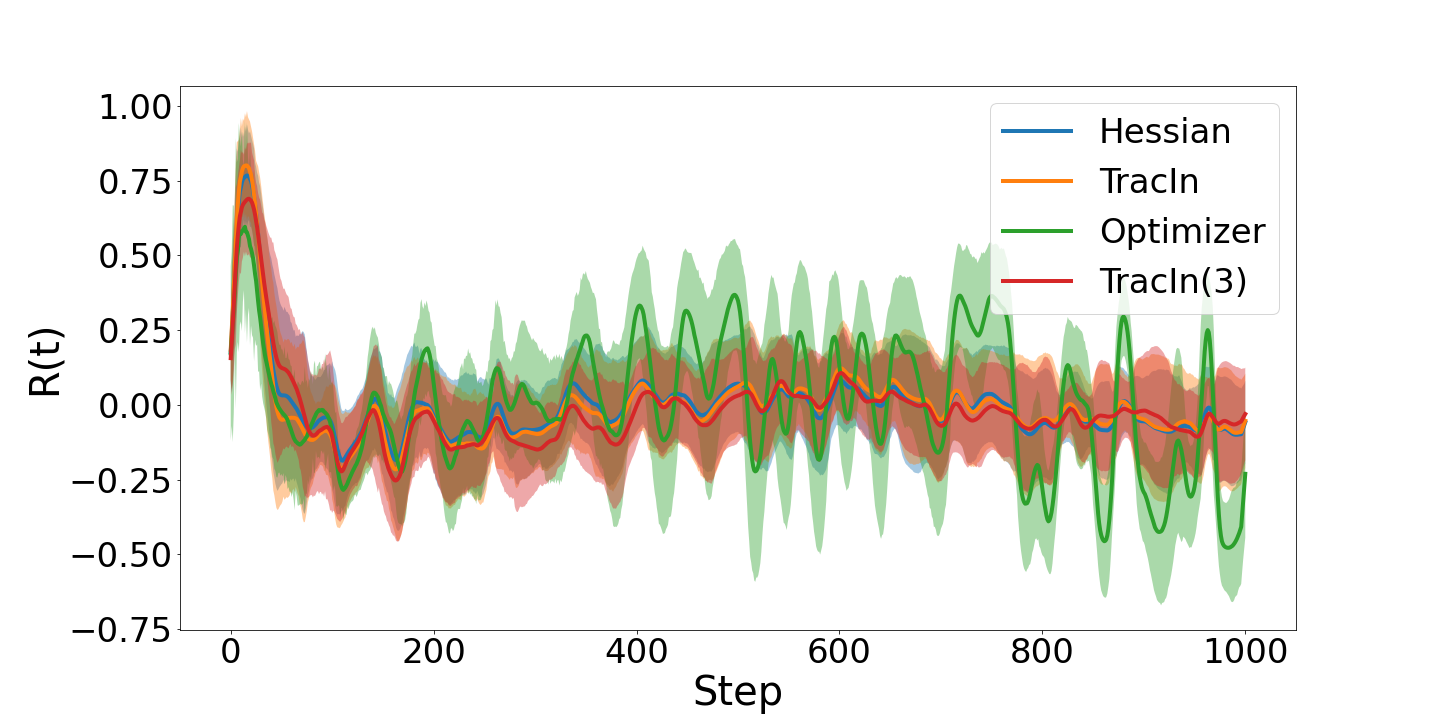}
\caption{For BERT the predictive power of influence scores on the loss shifts degrades over time very quickly. Using multiple checkpoints did not improve performance of TracIn.}
\label{fig:if_fading_bert_tracin}
\end{figure}

\begin{figure}%[!htp]
\centering
\includegraphics[width=\textwidth]{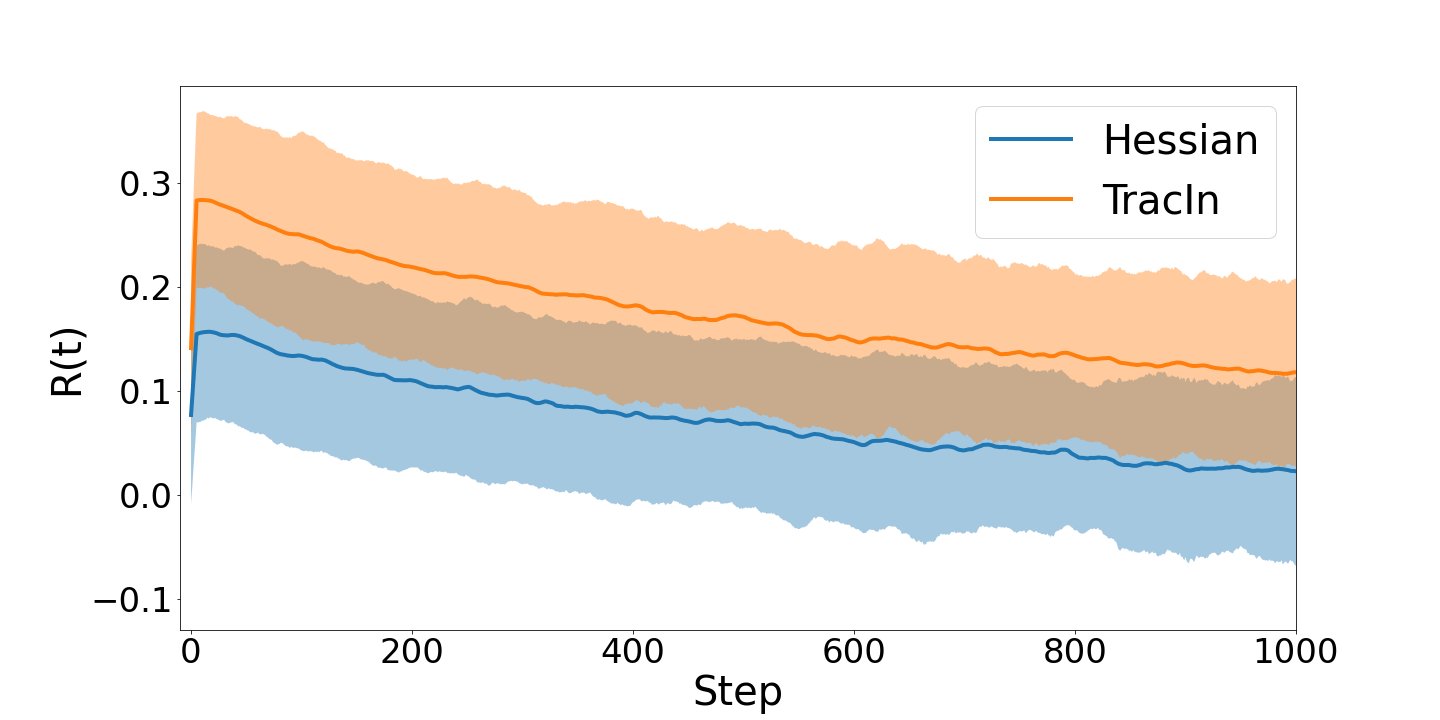}
\caption{For ResNet the predictive power of influence scores faded more slowly and was never particularly high.}
\label{fig:if_fading_resnet_zoomin}
\end{figure}

\begin{figure}%[!htp]
\centering
\includegraphics[width=\textwidth]{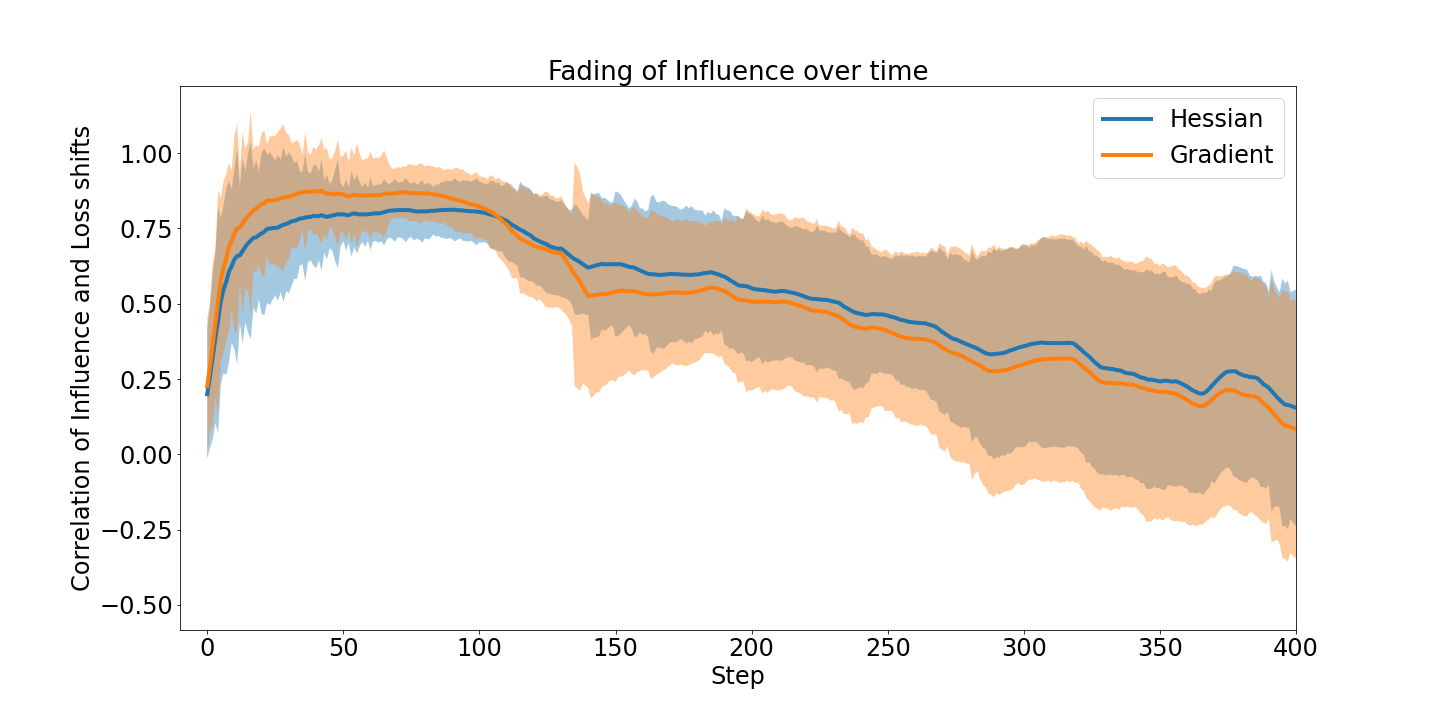}
\caption{For BERT the predictive power of influence scores faded more slowly when using SGD. The ``Gradient'' method is TracIn.}
\label{fig:if_fading_bert_sgd}
\end{figure}

In Figure~\ref{fig:if_fading_bert_zoomin} we zoom in Figure~\ref{fig:if_fading} (a: BERT). The fading phenomenon was non affected by checkpoint selection: in Figure~\ref{fig:if_fading_bert_tracin} we consider a later checkpoint and we see the same qualitative behavior. Moreover, in Figure~\ref{fig:if_fading_bert_tracin} we also consider TracIn with 3 checkpoints selected using the advice in~\cite{pruthi-tracin-20} -- we do not see improvements and the peak is even slightly lower than for TracIn using one checkpoint. Thus, in our further experiments we have used TracIn with a single checkpoint. 

In Figure~\ref{fig:if_fading_resnet_zoomin} we zoomed in Figure~\ref{fig:if_fading} (b: ResNet). In this case the predictive power was never particularly high, e.g.~for TracIn it quickly peaked at $0.3$ but it takes more steps than in the case of BERT to reach $0$. We conjecture that this is in part due to the slower divergence of parameters in ResNet (compare the x-axis of (a) and (b) in Figure~\ref{fig:exp_divergence}) and in part due to the use of SGD. In particular, we verified that a slower fading also takes place when fine-tuning BERT with SGD (Figure~\ref{fig:if_fading_bert_sgd}).

\section{How many steps are needed to correct mis-predictions?}
\begin{figure}%[!htp]
\centering
\begin{subfigure}[b]{0.49\textwidth}
\centering
\includegraphics[width=\textwidth]{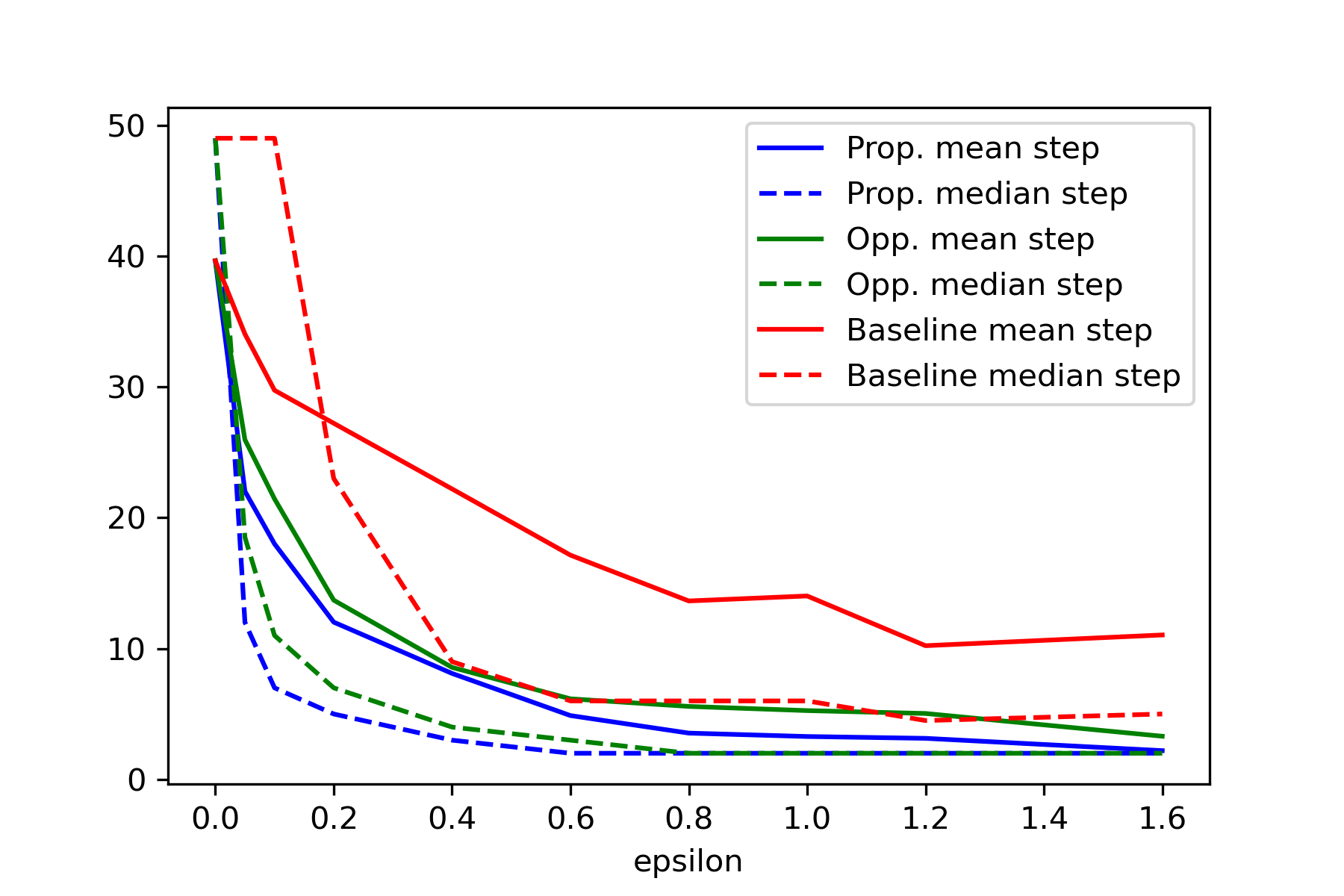}
\caption{}
\end{subfigure}
\begin{subfigure}[b]{0.49\textwidth}
\centering
\includegraphics[width=\textwidth]{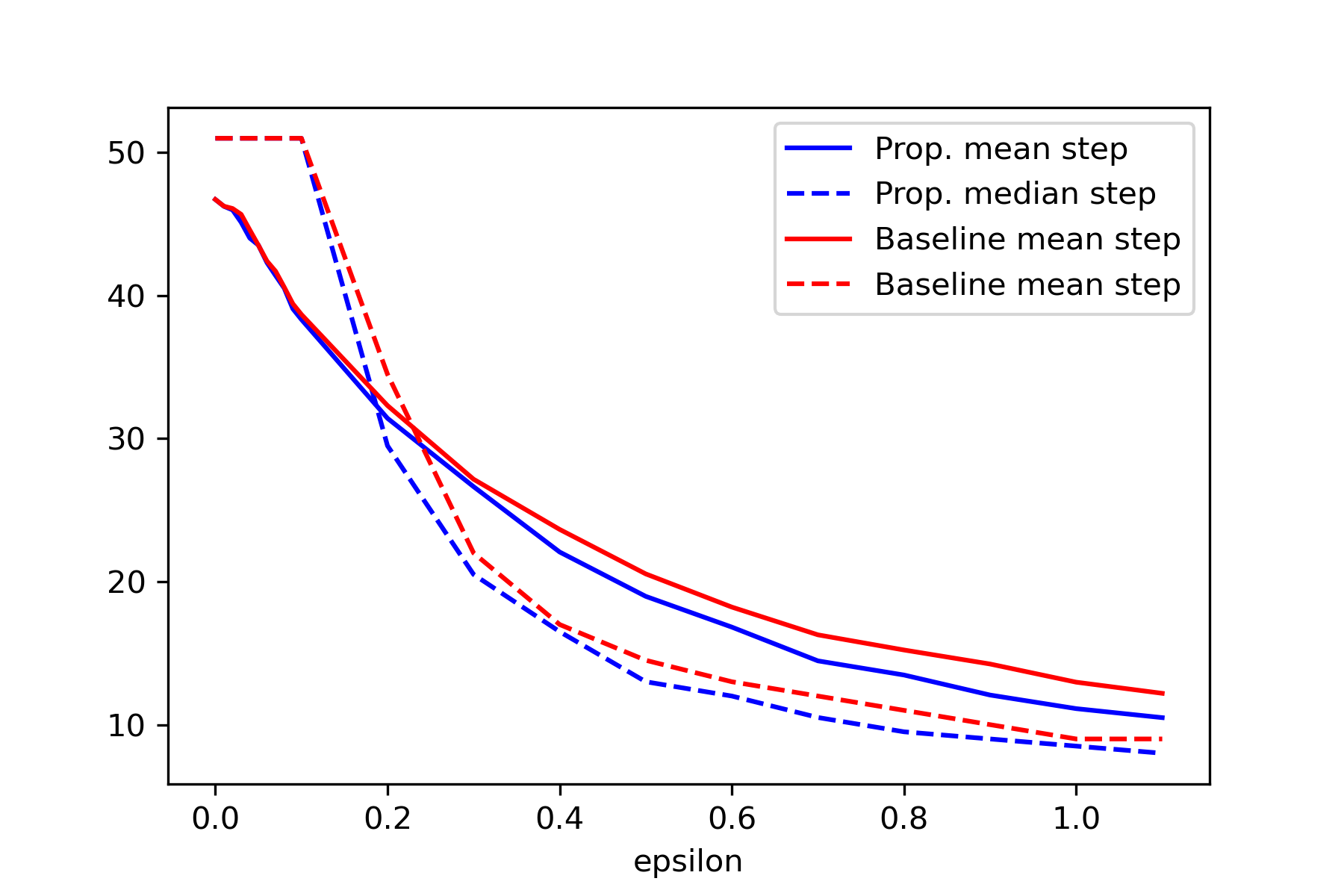}
\caption{}
\end{subfigure}
\caption{Average and median steps to correct mis-predictions. (a) BERT, (b) ResNet}
\label{fig:mispredictions_steps}
\end{figure}

In Figure~\ref{fig:mispredictions_steps} we plot the average and median number of steps to correct a mis-prediction as a function of $\varepsilon$. For BERT we see that both Proponent-correction and Opponent-tuning result in a large decrease in the number of steps to take. For ResNet, we do not plot Opponent-tuning as it performed poorly on success-rate. For ResNet the gains of Proponent-correction are smaller but consistent as $\varepsilon$ varies.

\section{Hyper-parameters}

\subsection{Training}
\begin{table}
  \caption{Training hyper-parameters for ResNet on CIFAR10}
  \label{tab:resnet_training_hyper}
  \centering
  \begin{tabular}{ll}
    \toprule
    Hyper-parameter & value \\
    Batch-size  & 128 \\
    Epochs & 200 \\
    Learning rate & $10^{-1}$ \\
    Learning rate scheduler & cosine decay \\
    Optimizer & SGD \\
    $l_2$-regularization & $10^{-4}$ \\
    \bottomrule
  \end{tabular}
\end{table}

\begin{table}
  \caption{Training hyper-parameters for BERT on SST2}
  \label{tab:bert_training_hyper}
  \centering
  \begin{tabular}{ll}
    \toprule
    Hyper-parameter & value \\
    Batch-size  & 128 \\
    Steps & 35000 \\
    Learning rate & $10^{-5}$ \\
    Learning rate scheduler & None \\
    Optimizer & Adam \\
    $l_2$-regularization & $5\times 10^{-6}$ \\
    \bottomrule
  \end{tabular}
\end{table}

ResNet was trained on a single V100 with the hyper-parameters in Table~\ref{tab:resnet_training_hyper}; training data was augmented using \texttt{torchvision} using the transformations \texttt{transforms.RandomCrop(32, padding=4)}, and \texttt{transforms.RandomHorizontalFlip()}.

BERT was trained on 8 TPUv3 cores using the hyper-parameters in Table~\ref{tab:bert_training_hyper}. The best checkpoint was selected on validation-set accuracy evaluated every 500 steps; it corresponded to 6000 steps of training.

\subsection{Correction Experiments}
For the correction experiments we used ABIF~\cite{schioppa-scaling-2021} because of their computational efficiency as IF scores need to be computed against the whole training set; we used 32 projectors obtained with 64 Arnoldi iterations. The Arnoldi iteration can take up to 2 hours; scoring the data takes a few minutes. We used a learning rate of $10^{-3}$ and for BERT we loaded the state of the Adam optimizer from the selected checkpoint.

For BERT we selected the checkpoint at 6000 steps which was the best on validation performance. For ResNet we selected the checkpoint after 10 epochs at which about~150 test points are incorrectly classified.

\section{Broader Impact Statement}

We believe that our work can benefit model developers who want to debug and correct mis-predictions made by models. Our suggested error-correction procedure is much less compute intensive than previous ones using Influence Functions as it does not require model retraining. However, this debugging procedure could introduce new errors in the systems, for example if on a specific problem Influence Functions are not effective at predicting loss-shifts or if examples retrieved by Influence Functions lead to over-fitting against spurious artifacts present in such examples.
Since this could have a negative impact on the users of such systems, mitigation strategies should be put in place regarding the trade-offs between the success rate of fixing specific mis-predictions and the overall retention of the system performance.

\end{document}